\def\isarxiv{1} 

\ifdefined\isarxiv
\documentclass[11pt]{article}

\else

\documentclass[twoside]{article}
\usepackage[authoryear]{natbib}
\usepackage{aistats2024}

\fi

\usepackage{amsmath}
\usepackage{amsthm}
\usepackage{amssymb}
\usepackage{algorithm}
\usepackage{subfig}
\usepackage{algpseudocode}
\usepackage{graphicx}
\usepackage{grffile}
\usepackage{wrapfig,epsfig}
\usepackage{url}
\usepackage{xcolor}
\usepackage{epstopdf}

\usepackage{bbm}
\usepackage{dsfont}
\usepackage[T1]{fontenc}

\allowdisplaybreaks

\ifdefined\isarxiv

\usepackage{tikz}
\usepackage{hyperref}  
\hypersetup{colorlinks=true,citecolor=blue,linkcolor=blue} 
\usetikzlibrary{arrows}
\usepackage[margin=1in]{geometry}

\else

\usepackage{microtype}
\usepackage{hyperref}
\definecolor{mydarkblue}{rgb}{0,0.08,0.45}
\hypersetup{colorlinks=true, citecolor=mydarkblue,linkcolor=mydarkblue}

\fi

\newtheorem{theorem}{Theorem}[section]
\newtheorem{lemma}[theorem]{Lemma}
\newtheorem{definition}[theorem]{Definition}

\newtheorem{corollary}[theorem]{Corollary}

\newtheorem{assumption}[theorem]{Assumption}

\newcommand{\wh}{\widehat}

\newcommand{\R}{\mathbb{R}}

\renewcommand{\d}{\mathrm{d}}

\renewcommand{\hat}{\wh}

\DeclareMathOperator*{\E}{{\mathbb{E}}}

\makeatletter
\newcommand*{\RN}[1]{\expandafter\@slowromancap\romannumeral #1@}
\makeatother


\usepackage{lineno}

\begin{document}

\ifdefined\isarxiv

\date{}

\title{An Automatic Learning Rate Schedule Algorithm for Achieving Faster Convergence and Steeper Descent}
\author{
Zhao Song\thanks{\texttt{zsong@adobe.com}. Adobe Research.}
\and 
Chiwun Yang\thanks{\texttt{christiannyang37@gmail.com}. 
Sun Yat-sen University.}
}

\else

\twocolumn[
\aistatstitle{An Automatic Learning Rate Schedule Algorithm for Achieving Faster Convergence and Steeper Descent}
\aistatsauthor{Zhao Song \And Chiwun Yang}
\aistatsaddress{Adobe Research \And ???}
]

\fi

\ifdefined\isarxiv
\begin{titlepage}
  \maketitle
  \begin{abstract}
    The delta-bar-delta algorithm is recognized as a learning rate adaptation technique that enhances the convergence speed of the training process in optimization by dynamically scheduling the learning rate based on the difference between the current and previous weight updates. While this algorithm has demonstrated strong competitiveness in full data optimization when compared to other state-of-the-art algorithms like Adam and SGD, it may encounter convergence issues in mini-batch optimization scenarios due to the presence of noisy gradients.

In this study, we thoroughly investigate the convergence behavior of the delta-bar-delta algorithm in real-world neural network optimization. To address any potential convergence challenges, we propose a novel approach called RDBD (Regrettable Delta-Bar-Delta). Our approach allows for prompt correction of biased learning rate adjustments and ensures the convergence of the optimization process. Furthermore, we demonstrate that RDBD can be seamlessly integrated with any optimization algorithm and significantly improve the convergence speed.

By conducting extensive experiments and evaluations, we validate the effectiveness and efficiency of our proposed RDBD approach. The results showcase its capability to overcome convergence issues in mini-batch optimization and its potential to enhance the convergence speed of various optimization algorithms. This research contributes to the advancement of optimization techniques in neural network training, providing practitioners with a reliable automatic learning rate scheduler for achieving faster convergence and improved optimization outcomes.
  \end{abstract}
  \thispagestyle{empty}
\end{titlepage}

{
}
\newpage

\else

\begin{abstract}

\end{abstract}

\fi

\section{Introduction}


\begin{figure}[!ht] 
\centering
    \includegraphics[scale=0.45]{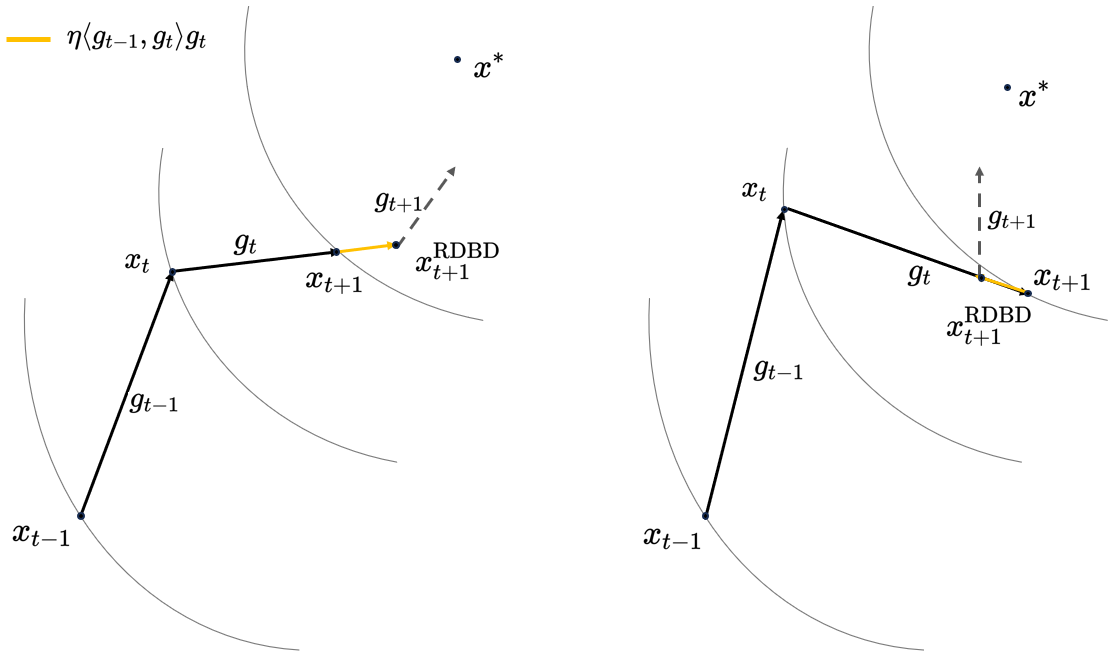}
    \caption{A 2D simulation of RDBD algorithm optimization in step $t$. }
    \label{fig:simulation}
\end{figure}

The learning rate is a crucial hyper-parameter in neural network training processes, as it significantly influences the trained model's convergence speed and overall performance. Numerous studies have demonstrated that inappropriate choice and scheduler of learning rate can hinder convergence during training and adversely impact the model's performance \cite{fah88, jac88, bl88, szl13, zei12, skyl17, ylwj19, gas19, lwm19, wlb+19, jsf+20, lew21}. It is imperative to carefully select an optimal learning rate, and develop an appropriate scheduler to ensure successful training and maximize the model's capabilities. 

Meanwhile, the delta-bar-delta algorithm, which has its roots in the works of \cite{wh60, bs81, jac88, mw90, sut92, ois05}, schedules the learning rate by assigning a unique learning rate to each weight in the model and dynamically updating these learning rates as individual optimization processes. It also has been recognized as a meta-learning algorithm \cite{vd02, van18, hams21, fal17} since it optimizes the learning rate itself, effectively {\it learning to learn} during the training process. In a nutshell, the delta-bar-delta algorithm can be summarized as follows: for each weight in the model, the learning rate is adjusted based on the dot product of the current gradient and the previous gradient. If the dot product is positive, the learning rate increases, and if it is negative, the learning rate decreases. This adaptive rule for learning rate greatly influenced subsequent advancements in optimization algorithms, including AdaGrad \cite{dhs11}, Adadelta \cite{zei12}, RMSProp \cite{hss12}, Adam \cite{kb14}, Adamax \cite{lih17}, and Nadam \cite{doz16}.

Inferring this learning rate adaptation method only requires a simple derivation. Denote the loss function $f(x)$ of a parameter weight $x$ in back-propagation. In this algorithm, we consider the loss function $f(x)$ of a parameter weight $x$ in the back-propagation process. Begins by initializing a learning rate $\alpha$. At each optimization step $t$, we update the parameters $x_t$ using the equation $x_t = x_{t-1} - \alpha_{t-1} \nabla f(x_{t-1})$, where $\nabla f(x_{t-1})$ represents the gradient of the loss function with respect to the parameters at the previous step. Our objective is to minimize $f(x_t)$ while keeping $x_{t-1}$ fixed. To achieve this with minimal regret, we optimize the parameter $\alpha_{t-1}$ using the following equation:
\begin{align*}
\alpha_{t} = & ~ \alpha_{t-1} - \eta \nabla_{\alpha_{t-1}} f(x_t) \\
= & ~ \alpha_{t-1} + \eta \langle \nabla f(x_{t-1}), \nabla f(x_t) \rangle
\end{align*}
where $\eta$ represents the learning rate for updating $\alpha$. Furthermore, we provide a simple theorem to show the convergence of the delta-bar-delta algorithm in full data optimization, which showcases its strong competitiveness:
\begin{theorem}[Convergence guarantee for delta-bar-delta algorithm in full batch optimization, informal version of Theorem~\ref{thm:convergence_dbd:formal}]\label{thm:convergence_dbd:informal}
    Suppose $\nabla f(x)$ is Lipschitz-smooth with constant $L$ $\| \nabla f(x) - \nabla f(y) \|_2 \leq L \| x - y \|, \forall x, y \in \R^d$ and gradient of loss function $\nabla f(x)$ is $\sigma$-bounded that $\| \nabla f(x ) \|_2 \leq \sigma, \forall x \in \R^d$. We initialize $\alpha_0 = \frac{1}{L}$ and $\eta = \frac{\gamma}{T\sigma^2L}$, where $\gamma \in [0, 1)$ is denoted as a scalar. Denote $f^* := \min_{x \in \R^d} f(x)$. Let $\epsilon > 0$ be denoted as the error of training. We run the delta-bar-delta algorithm at most $\frac{2 L( f(x_0) - f^* )}{(1 - \gamma^2) \epsilon^2}$ times iterations, we have
    \begin{align*}
        \min_t \{ \| \nabla f(x_t) \|_2 \} \leq \epsilon
    \end{align*}
\end{theorem}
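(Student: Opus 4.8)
The plan is to follow the classical descent-lemma route for $L$-smooth nonconvex optimization, but with the twist that the step size $\alpha_t$ is itself moving. The whole argument hinges on first pinning down a two-sided bound on the adaptive learning rate. Since the update reads $\alpha_t = \alpha_{t-1} + \eta\langle \nabla f(x_{t-1}), \nabla f(x_t)\rangle$, Cauchy--Schwarz together with the $\sigma$-boundedness of the gradient gives $|\alpha_t - \alpha_{t-1}| \leq \eta \sigma^2$ at every step. Summing the per-step increments over at most $T$ iterations and plugging in $\eta = \frac{\gamma}{T\sigma^2 L}$, the total drift from the initialization $\alpha_0 = \frac{1}{L}$ is at most $T \cdot \eta\sigma^2 = \frac{\gamma}{L}$. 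Hence I would establish the invariant
\begin{align*}
\frac{1-\gamma}{L} \leq \alpha_t \leq \frac{1+\gamma}{L} \qquad \text{for all } t \leq T,
\end{align*}
which in particular keeps $\alpha_t$ strictly positive and strictly below $\frac{2}{L}$ because $\gamma \in [0,1)$.

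Next I would invoke the standard descent inequality implied by $L$-Lipschitz smoothness. For the update $x_t = x_{t-1} - \alpha_{t-1}\nabla f(x_{t-1})$ it yields
\begin{align*}
f(x_t) \leq f(x_{t-1}) - \alpha_{t-1}\Big(1 - \tfrac{L\alpha_{t-1}}{2}\Big)\|\nabla f(x_{t-1})\|_2^2.
\end{align*}
The crucial observation is that the coefficient $g(\alpha) := \alpha(1 - \frac{L\alpha}{2})$ is concave and peaks at $\alpha = \frac{1}{L}$, so over the interval $[\frac{1-\gamma}{L}, \frac{1+\gamma}{L}]$ it attains its minimum at the endpoints; a direct computation shows both endpoints give exactly $g = \frac{1-\gamma^2}{2L}$. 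Substituting this uniform lower bound converts the per-step inequality into $f(x_t) \leq f(x_{t-1}) - \frac{1-\gamma^2}{2L}\|\nabla f(x_{t-1})\|_2^2$, which no longer depends on the exact value of $\alpha_{t-1}$.

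Finally I would telescope this over $t = 1, \ldots, T$, so the sum collapses on the left to $f(x_0) - f(x_T) \leq f(x_0) - f^*$, yielding $\sum_{t=0}^{T-1}\|\nabla f(x_t)\|_2^2 \leq \frac{2L(f(x_0)-f^*)}{1-\gamma^2}$. Since the minimum gradient norm is bounded by the average, dividing by $T$ gives $\min_t \|\nabla f(x_t)\|_2^2 \leq \frac{2L(f(x_0)-f^*)}{(1-\gamma^2)T}$, and setting $T = \frac{2L(f(x_0)-f^*)}{(1-\gamma^2)\epsilon^2}$ forces this quantity below $\epsilon^2$, exactly matching the claimed iteration count.

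The main obstacle I anticipate is the learning-rate bound: the descent lemma only produces genuine descent while $\alpha_t < \frac{2}{L}$, so the entire argument rests on the invariant holding \emph{throughout} all $T$ iterations rather than just initially. This is why the $\frac{1}{T}$ factor in $\eta$ is essential --- it is precisely calibrated so that the worst-case cumulative drift never exhausts the safety margin $\frac{\gamma}{L}$, and a sloppier choice of $\eta$ would let $\alpha_t$ escape the admissible range and break the telescoping descent. Everything else is routine once this invariant and the endpoint-minimization of $g(\alpha)$ are in place.
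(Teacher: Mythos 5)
Your proposal is correct and follows essentially the same route as the paper's own proof: the two-sided drift bound $\alpha_t \in [\frac{1-\gamma}{L}, \frac{1+\gamma}{L}]$ is exactly the paper's Lemma~\ref{lem:bound_alpha}, your uniform lower bound $\frac{1-\gamma^2}{2L}$ on the descent coefficient is the reciprocal of the paper's Lemma~\ref{lem:upper_bound_frac_2_2alpha_sub_alpha^2_L} bound $\frac{2L}{1-\gamma^2}$, and the telescoping plus min-bounded-by-average finish is identical. If anything, your explicit concavity argument showing both endpoints of the admissible interval attain the same value of $\alpha(1-\frac{L\alpha}{2})$ is slightly more careful than the paper, which only evaluates the lower endpoint.
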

For a detailed derivation process and a simple convergence proof of the delta-bar-delta algorithm, please refer to Appendix~\ref{sec:convergence_dbd}.

However, the presence of noisy gradients in mini-batch optimization poses a challenge when using the delta-bar-delta algorithm for real-world neural network optimization. Biased or noisy gradients can adversely affect learning rate updates, potentially leading to erroneous adjustments and the collapse of the training process \cite{qk20, sed20, wlg+19, zlm21, zlsu21, jka+17}. To address this issue, prior works have explored regularization techniques like sign function and exponential function to mitigate the impact of noisy gradients \cite{jac88, mw90, sut92, agks19, zhsj19, lxls19, bcr+17, ois05, aafw22}. These strategies aim to stabilize the learning process and improve the algorithm's resilience to noisy gradient estimates. Handling noisy gradients effectively remains an ongoing focus of research in neural network optimization.

In this paper, we introduce a novel approach called the Regrettable Delta-Bar-Delta (RDBD) algorithm, which aims to improve loss reduction and convergence in mini-batch optimization. {\bf We define the term {\it regrettable} as the capacity to reevaluate and amend the previous learning rate update. This capability allows for a more nuanced and adaptable approach to optimizing the learning process.} The RDBD algorithm incorporates a buffering mechanism for the learning rate updates, allowing for verification of the effectiveness of the previous update when a new gradient is obtained. Specifically, at each step $t+1$, we define $h_{t+1} := \langle g_{t-1}, g_{t} \rangle$, where $g_{t}$ represents the gradient at step $t$. Similarly, we have $h_{t} = \langle g_{t}, g_{t-1} \rangle$. If the product $h_{t+1} \cdot h_{t}$ is negative, it indicates that the previous learning rate update $\alpha_t = \alpha_{t-1} + \eta h_{t-1}$ is biased. In such cases, we revert this update by implementing $\alpha_t \leftarrow \alpha_t - \eta h_{t-1}$, making the learning rate adjustment regrettable  (Algorithm ~\ref{alg:RDBD}).

We state our main contributions and results in the following section.

\subsection{Contributions and Results}

In this study, our contributions include the following:
\begin{itemize}
    \item We propose the Regrettable Delta-Bar-Delta (RDBD) algorithm, which ensures the convergence of the delta-bar-delta algorithm in mini-batch optimization. RDBD serves as a learning rate schedule method that can be combined with any optimizing algorithm, directly enhancing the convergence speed.
    \item We provide theoretical proofs demonstrating that our RDBD algorithm exhibits steeper descent regardless of the optimization algorithm it is combined with. Additionally, we establish the convergence guarantee for the RDBD algorithm in mini-batch optimization.
    \item We conduct experiments to evaluate the efficacy of RDBD by applying it to SGD and Adam on popular datasets such as Cifar-10 and MNIST. The experimental results demonstrate a significant improvement, highlighting the strong capability of our RDBD method as a learning rate schedule algorithm.
\end{itemize}

Then, we show our theorem of the main result as follows:

\begin{theorem}[Main result, formal version of Theorem~\ref{thm:main_result:formal}]\label{thm:main_result:informal}
    For a loss function $f: \R^d \rightarrow \R$ and the weight $x$ of a vector parameter of a model. We run RDBD algorithm on it. We first initialize $x_0 = x$, $\alpha_0 = \frac{\sqrt{f(x_0) - f^*}}{\sigma\sqrt{L T}}$, $\eta = \frac{\gamma\sqrt{f(x_0) - f^*}}{T\sigma^3\sqrt{L T}}$ where $\gamma \in (0, 1)$. Denote $f^* := \min_{x \in \R^d} f(x)$. Let $\epsilon > 0$ be denoted as the error of training. We have
    \begin{itemize}
        \item {\bf Steeper Loss Descent.} At step $t$, the RDBD algorithm accelerates loss reduction as follows:
        \begin{align*}
            f(x_{t+1}^{\rm RDBD}) \leq f(x_{t+1}) \leq f(x_t)
        \end{align*}
        \item {\bf Convergence. } Let
        \begin{align*}
            T = \frac{1}{\epsilon^2} \cdot \sigma \sqrt{L(f(x_0) - f^*)} (\frac{1}{1- \gamma} + \frac{1}{2} ( 1 + \gamma )) 
        \end{align*}
        then, at most $T$ time iterations RDBD algorithm, we have
        \begin{align*}
            \min_t \{ \| \nabla f(x_t) \|_2 \} \leq \epsilon
        \end{align*}
    \end{itemize}
\end{theorem}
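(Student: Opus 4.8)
The plan is to split the argument into the two advertised pieces --- the one-step \textbf{Steeper Loss Descent} comparison and the global \textbf{Convergence} rate --- both resting on the descent inequality implied by $L$-smoothness. Writing $g_t := \nabla f(x_t)$ and using $x_{t+1} = x_t - \alpha_t g_t$, the smoothness assumption gives
\begin{align*}
f(x_{t+1}) \le f(x_t) - \alpha_t \Big( 1 - \frac{L \alpha_t}{2} \Big) \| g_t \|_2^2 .
\end{align*}
Everything reduces to controlling the adaptive rate $\alpha_t$ so that the coefficient $\alpha_t(1 - L\alpha_t/2)$ stays positive (for descent) and bounded below (for the rate).

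For \textbf{Steeper Loss Descent} I would first note that the descent term above, viewed as a function of $\alpha_t$, is a downward parabola whose descent magnitude is maximized near $\alpha = 1/L$. The second inequality $f(x_{t+1}) \le f(x_t)$ then follows immediately once I show $\alpha_t \in (0, 2/L)$ throughout (established in the drift step below). For the first inequality $f(x_{t+1}^{\rm RDBD}) \le f(x_{t+1})$, I would compare the uncorrected rate against the reverted one: the reversion is triggered exactly when the product of consecutive inner products is negative, and since $h_{t+1} = \langle g_t, g_{t+1}\rangle$ is, up to sign, the directional derivative of $\phi(\alpha) := f(x_t - \alpha g_t)$ at $\alpha = \alpha_t$, a sign flip certifies that the previous increment to $\alpha$ overshot the descent-favorable region. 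Undoing that increment therefore moves $\alpha_t$ back toward the interval where the quadratic bound is smaller, so the RDBD iterate inherits a one-step upper bound no larger than the uncorrected one.

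For \textbf{Convergence} the crucial sub-step is a drift bound on $\alpha_t$. By Cauchy--Schwarz and the $\sigma$-bounded gradient assumption, $|h_t| = |\langle g_{t-1}, g_t \rangle| \le \|g_{t-1}\|_2 \|g_t\|_2 \le \sigma^2$, so each learning-rate update moves $\alpha$ by at most $\eta \sigma^2$. Accumulated over the $T$ iterations this gives total drift at most $T \eta \sigma^2 = \gamma \alpha_0$ with the stated choice of $\eta$, and since each reversion can only cancel a previous increment it never enlarges this budget. Hence $\alpha_t \in [(1-\gamma)\alpha_0, (1+\gamma)\alpha_0]$ for all $t$, which for the stated $\alpha_0$ sits inside $(0, 2/L)$. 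I would then telescope the descent inequality over $t = 0, \dots, T-1$, lower-bound the linear term by $(1-\gamma)\alpha_0 \sum_t \|g_t\|_2^2$, upper-bound the quadratic correction using $\alpha_t \le (1+\gamma)\alpha_0$ and $\|g_t\|_2 \le \sigma$, and rearrange to isolate $\frac1T \sum_t \|g_t\|_2^2 \ge \min_t \|g_t\|_2^2$. Plugging in $\alpha_0$ and $\eta$ and solving for the $T$ that forces this below $\epsilon^2$ produces the factor $\frac{1}{1-\gamma} + \frac12(1+\gamma)$, where the first summand comes from the lower bound $(1-\gamma)\alpha_0$ on the descent and the second from the $(1+\gamma)$ inflation of the quadratic term.

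The main obstacle I anticipate is the \emph{joint} control of the learning-rate trajectory and the reversion operator: because the reversion test depends on the \emph{future} gradient $g_{t+1}$, the recursion defining $\alpha_t$ is not a clean forward recurrence, so I must argue that no admissible sequence of reversions can push $\alpha_t$ outside $[(1-\gamma)\alpha_0, (1+\gamma)\alpha_0]$ --- equivalently, that reversions only ever contract the drift. A secondary subtlety is keeping the quadratic remainder genuinely lower order than the linear descent term under the $T$-dependent choice of $\alpha_0$; this is what pins down the precise constant multiplying $\sigma\sqrt{L(f(x_0)-f^*)}$ and must be tracked carefully rather than absorbed into $O(\cdot)$.
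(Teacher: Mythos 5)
Your convergence half is essentially the paper's own argument: Theorem~\ref{thm:convergence_rdbd:formal} likewise bounds the drift by $|\alpha_t - \alpha_0| \le T\eta\sigma^2 = \gamma\alpha_0$ (Lemma~\ref{lem:bound_alpha}; and, as you suspected, the future-dependent reversion test is a non-issue there, since each update moves $\alpha$ by at most $\eta\sigma^2$ and there are at most $T$ of them whether or not some are later undone), then telescopes the smoothness inequality, lower-bounds the linear term via $(1-\gamma)\alpha_0$ and upper-bounds $\sum_t \frac{L}{2}\alpha_t^2\sigma^2$ via $(1+\gamma)\alpha_0$, yielding the same constant $\frac{1}{1-\gamma}+\frac{1}{2}(1+\gamma)$. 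Do note you inherit the paper's loose endgame: the telescoped bound gives $\min_t\|\nabla f(x_t)\|_2^2 \le \frac{1}{\sqrt{T}}\,\sigma\sqrt{L(f(x_0)-f^*)}\,\big(\frac{1}{1-\gamma}+\frac{1}{2}(1+\gamma)\big)$, so the stated $T = \epsilon^{-2}\sigma\sqrt{L(f(x_0)-f^*)}\,(\cdots)$ only forces $\min_t\|\nabla f(x_t)\|_2 \le T^{1/4}\epsilon$; making the gradient itself at most $\epsilon$ actually requires $T$ of order $\epsilon^{-4}$, a slip your ``solving for $T$'' step glosses over just as the paper does.

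The genuine gap is in your steeper-descent half, and it is twofold. First, by declaring $g_t := \nabla f(x_t)$ you have silently moved to full-batch optimization, whereas the theorem (and Lemma~\ref{lem:steeper_descent:formal}) concerns stochastic $g_t$ with only $\E[g_t] = \nabla f(x_t)$; in that setting your claim that $f(x_{t+1}) \le f(x_t)$ ``follows immediately once $\alpha_t \in (0,2/L)$'' fails, and the paper instead needs the correlation assumption $\langle \nabla f(x_t), g_t \rangle \ge \mu\|g_t\|_2^2$ (Assumption~\ref{ass:lower_bound_langle_nabla_f_g_rangle}) together with $\alpha \le 2\mu/L$ (Part 3 of the lemma). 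Second, your argument for $f(x_{t+1}^{\rm RDBD}) \le f(x_{t+1})$ targets the wrong case: when the sign test fails and the update is reverted, the RDBD iterate \emph{equals} the vanilla one ($x_{t+1}^{\rm RDBD} = x_t - \alpha g_t$, Part 2 of Definition~\ref{def:rdbd}), so the comparison there is a trivial equality. The nontrivial inequality lives in the non-reverted case, where RDBD takes the \emph{extra} displacement $-\eta\langle g_t, g_{t-1}\rangle g_t$ and one must show the first-order gain dominates the curvature penalty $\frac{L}{2}\eta^2\langle g_t,g_{t-1}\rangle^2\|g_t\|_2^2$. Your parabola/overshoot heuristic provides no mechanism for this: the sign of $\phi'(\alpha_t)$ is a purely qualitative certificate (and exact only for quadratic $\phi$, and only in expectation once $g_{t+1}$ is stochastic). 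The paper closes this with two quantitative ingredients absent from your plan --- a margin assumption $\min_t \langle g_{t+1}, g_t \rangle \ge \tau > 0$, which via $\nabla f(x_{t+1}) = \E[g_{t+1}]$ yields $|\langle \nabla f(x_{t+1}), g_t\rangle| \ge \frac{\tau}{\sigma^2}\|g_t\|_2^2$, and the step-size restriction $\eta \le \frac{2}{L\sigma^2}$ --- combined in Eqs.~\eqref{eq:descent_ineq}--\eqref{eq:negetive_term}. Without some such quantitative lower bound on $\langle \nabla f(x_{t+1}), g_t \rangle$, the one-step comparison cannot be established, so this piece of your proposal does not go through as written.
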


Please see Appendix~\ref{sub:main_result_proof} for the proof of Theorem~\ref{thm:main_result:informal}.

Furthermore, we present the main experimental result of our paper (Figure~\ref{fig:main_result}), where we run four algorithms including Adam, Adam+RDBD (a combination of Adam and RDBD), SGD, RDBD on two datasets: MNIST \cite{lcb09} and Cifar-10 \cite{kh09}. Please refer to Section~\ref{sec:experiment} for more experimental details.

\begin{figure*}
    \centering
    \includegraphics[width=\textwidth]{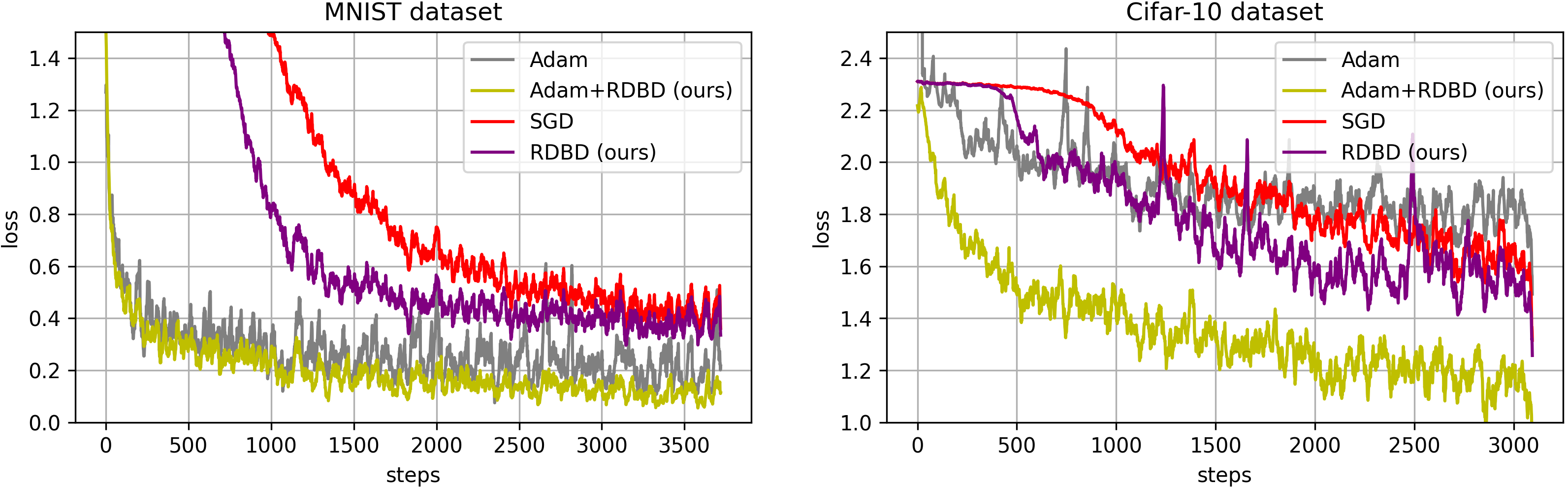}
    \caption{Comparison of Adam, Adam+RDBD, SGD, RDBD algorithms on MNIST dataset and Cifar-10 dataset.}
    \label{fig:main_result}
\end{figure*}

\section{Related Work}

In this section, we review related research on Learning Rate Adaptation Methods, Meta-learning Algorithms, and Optimization and Convergence of Deep Neural Network. These topics are closely connected to our work.

\paragraph{Learning Rate Adaptation Methods.} Previous research explored the correlation between the scheduled learning rate and convergence speed in neural network optimization and they developed a range of learning rate adaptation algorithms to enhance convergence speed \cite{awzd17, la19, gkkn19, din21, dhs11, zei12, hss12, kb14, lih17, doz16, gch+19, jagg20, zzl+18, bac+17, all17, lo19, bcr+17, dm23, lavb23, ysm23, ljh+19, jzz+21, rc21}. \cite{gkkn19} investigates the behavior and convergence rates of the final iterate in stochastic convex optimization problems, focusing on streaming least squares regression with and without strong convexity. The study reveals that polynomially decaying learning rate schemes lead to highly sub-optimal performance compared to the statistical minimax rate, regardless of whether the time horizon is known in advance. However, it is shown that step decay schedules, which geometrically reduce the learning rate, offer significant improvements and achieve rates close to the statistical minimax rate with only logarithmic factors. In the study of \cite{ljh+19}, they explore the mechanism behind warmup and discover a problem with the adaptive learning rate, specifically its large variance in the early stage. They propose that warmup acts as a variance reduction technique and provide empirical and theoretical evidence to support this hypothesis. Additionally, they introduce RAdam, a new variant of Adam that rectifies the variance of the adaptive learning rate. D-Adaptation (\cite{dm23}) is a novel approach that automatically sets the learning rate to achieve optimal convergence for minimizing convex Lipschitz functions. It eliminates the need for back-tracking or line searches and does not require additional function value or gradient evaluations per step.

\paragraph{Meta-learning Algorithms.} Meta-learning, also known as {\it learning to learn}, is a subfield of machine learning that focuses on improving the performance of learning algorithms by changing certain aspects of the learning process based on the results of previous learning experiences. In meta-learning, the goal is to develop algorithms or models that can learn how to learn. Instead of focusing solely on solving a specific task, meta-learning algorithms aim to acquire knowledge and strategies that can be applied to a wide range of related tasks \cite{vd02, van18, hams21, fal17, sd03, hyc01, bvl23, hcw+23, csy23b, cjn23, wfz+23, sl23}. \cite{wfz+23} introduces the Adaptive Compositional Continual Meta-Learning (ACML) algorithm, which addresses the challenge of handling heterogeneous and sequentially available few-shot tasks in continual meta-learning. ACML overcomes these limitations by employing a compositional premise that associates a task with a subset of mixture components, enabling meta-knowledge sharing across heterogeneous tasks. To effectively utilize minimal annotated examples in new languages for few-shot cross-lingual semantic parsing, \cite{sl23} introduces a first-order meta-learning algorithm. This algorithm trains the semantic parser using high-resource languages and optimizes it for cross-lingual generalization to lower-resource languages.

\paragraph{Optimization and Convergence of Deep Neural Network. }  Extensive prior research has played a pivotal role in elucidating the optimization and convergence principles underlying deep neural networks, thereby explaining their remarkable performance across diverse tasks. Moreover, these studies have significantly contributed to the enhancement of efficiency in deep learning frameworks. Notable works in this domain include those by authors such as \cite{zsj+17, zsd17, ll18, dzps18, azls19a, azls19b, adh+19a, adh+19b, sy19, cgh+19, zmg19, cg19, zg19, os20, jt19, lss+20, hlsy21, zpd+20, bpsw21,  zkv+20, syz21, szz21, als+22, mosw22, zha22, gms23, dls23, gsy23_coin, lsz23, wyw+23, qsy23, csy23a, clh+23, gsx23, rkk19, swy23, jsps16, gswy23, sy23, lsy23, lsx+23}. In \cite{lsy23}, they propose a generalized form of Federated Adversarial Learning (FAL) based on centralized adversarial learning. FAL incorporates an inner loop on the client side for generating adversarial samples and an outer loop for updating local model parameters. On the server side, FAL aggregates local model updates and broadcasts the aggregated model. The authors introduce a global robust training loss and formulate FAL training as a min-max optimization problem. In \cite{gms23} they define a neural function using an exponential activation function and apply the gradient descent algorithm to find optimal weights. \cite{rhs+16} focuses on nonconvex finite-sum problems and investigates the application of stochastic variance reduced gradient (SVRG) methods. While SVRG and related methods have gained attention for convex optimization, their theoretical analysis has primarily focused on convexity. In contrast, this research establishes non-asymptotic convergence rates of SVRG for nonconvex optimization and demonstrates its superiority over stochastic gradient descent (SGD) and gradient descent. The analysis also reveals that SVRG achieves linear convergence to the global optimum in a specific subclass of nonconvex problems. Additionally, the study explores mini-batch variants of SVRG, demonstrating theoretical linear speedup through minibatching in parallel settings.

\section{Preliminary}

In this section, we provide the notations and formally define the problem that we aim to address in this paper. 

\subsection{Notations}

In this paper, we adopt the following notations: The number of dimensions of a neural network is denoted by $d$. The set of real numbers is represented by $\R$. A vector with $d$ elements, where each entry is a real number, is denoted as $x \in \mathbb{R}^{d}$. The weight vector parameter in a neural network is denoted as $x$. We use $x$ to denote the weight of a vector parameter in a neural network. For any positive integer $n$, we use $[n]$ to denote $\{1,2,\cdots, n\}$. The $\ell_p$ norm of a vector $x$ is denoted as $\| x \|_p$, for examples, $\| x \|_1 := \sum^n_{i=1} | x_i |$, $\| x \|_2 := ( \sum^n_{i=1} x_i^2 )^{1/2}$ and $\| x \|_\infty := \max_{i \in [n]} | x_i |$. For two vectors $x, y \in \R^d$, we denote $\langle x, y \rangle = \sum^n_{i=1}$ for $i \in [d]$. The loss function of a weight vector parameter $x \in \mathbb{R}^d$ on the entire dataset is denoted as $f: \mathbb{R}^d \rightarrow \mathbb{R}$. Specifically, $f_\xi(x)$ represents the loss function of $x$ on the batch data $\xi$. The learning rate during training is denoted as $\alpha$. In particular, $\eta$ is used to represent the learning rate for updating $\alpha$ in the delta-bar-delta algorithm. We consider the optimization process as consisting of $T$ update steps. For an integer $t \in [T]$, at step $t$, we have the weight vector parameter $x_t$ and the learning rate $\alpha_t$. We use $\nabla f(x)$ to denote $\frac{\partial f(x)}{\partial x}$. We use $\E[]$ to denote expectation.

\subsection{Problem Setup}

While the delta-bar-delta algorithm has demonstrated impressive effectiveness in learning rate scheduling, challenges related to noisy gradient updates and convergence in mini-batch optimization persist. In our research, we specifically focus on addressing these challenges in relation to the weight of a parameter $x$ within a neural network. It is important to note that in this context, $x$ does not represent the weight of the entire neural network, but rather the weight of a specific vector parameter within it. The weights of a neural network can be divided into multiple vectors, each with its own associated learning rate. Accordingly, our proposed method is applied separately to each individual $x$, considering the specific vector it belongs to.

We consider the problem of unconstrained nonconvex minimization of vector $x \in \R^d$:
\begin{align*}
    \min_{x \in \R^d} f(x)
\end{align*}
where $f: \R^d \rightarrow \R$ is the unbiased loss of vector $x$ in back-propagation. Our goal is to achieve loss descent and accelerate the rate of loss reduction through the use of learning rate schedule algorithms. Additionally, we aim to provide a convergence guarantee for the optimization process.

Now let's first formulate the assumptions of problem in this paper. The conditions assumed for this problem are as follows:
\begin{itemize}
    \item {\bf Lipschitz-continuity.} The gradient of $f(x)$ (denoted as $\nabla f(x)$) is Lipschitz-smooth with constant $L$, that is $\| \nabla f(x) - \nabla f(y) \|_2 \leq L \| x - y \|_2, \forall x, y \in \R^d$.
    \item {\bf Bounded update (gradient).} Denote the weight update of $x$ at step $t$ as $g_t$, $g_t$ is $\sigma$-bounded that $\| g_t \|_2 \leq \sigma$, for $t \in [T]$.
    \item {\bf Unbiased gradient estimator.} For $t \in [T]$, $g_t$ satisfies that:
    \begin{itemize}
        \item $\E[g_t] = \nabla f(x_t)$, where $x_t$ is the weight at step $t$.
        \item $\langle \nabla f(x_t), g_t \rangle \geq \mu \| g_t \|_2^2$.
    \end{itemize}
\end{itemize}
After that, we provide our main objectives addressed in this paper:
\begin{itemize}
    \item {\bf Loss descent.} By adjusting the learning rate, we aim to achieve a decrease such that
    \begin{align*}
        \E[f(x_{t+1})] \leq f(x_t)
    \end{align*}
    \item {\bf Steeper loss descent. }  We seek to further accelerate the rate of loss reduction by utilizing learning rate schedule algorithms. Specifically, we introduce the concept of learning-rate-scheduled loss $f(x_{t+1}')$, which satisfies
    \begin{align*}
        f(x_{t+1}') \leq f(x_{t+1}) \leq f(x_{t})
    \end{align*}
    \item {\bf Convergence guarantee.} For any $\epsilon > 0$, there exists a positive integer $T$ that
    \begin{align*}
        \min_{t \in [T]}\{ \| \nabla f(x_t) \} \leq \epsilon
    \end{align*}
\end{itemize}
At this point, we will provide a detailed introduction to our methods and work in the following sections.

\section{Regrettable Delta-Bar-Delta Algorithm}

This section introduces a novel learning rate schedule algorithm named the Regrettable Delta-Bar-Delta (RDBD) algorithm. It is developed based on the delta-bar-delta algorithm but with the addition of a unique feature that allows the learning rate schedule to undo the previous learning rate update when it is found to be biased. Building upon the delta-bar-delta algorithm, we analyze from the perspective of single-step steeper loss descent: For a neural network, we divide it into several vector parameters, each with its own learning rate. We employ the delta-bar-delta algorithm independently for optimizing each vector parameter. 

Let's consider a neural network with vector parameter weight $x_0$ at the initial step ($0$-th step). We also initialize a learning rate $\alpha_0$ for optimizing $x$ and a learning rate $\eta$ for optimizing $\alpha$. The loss function of $x$ is denoted as $f(x)$, and we aim to optimize $x$ over $T$ steps. Following the delta-bar-delta algorithm, at step $t$ for $t \in [T]$, we define the weight update of this step as $g_t$. We update the learning rate as $\alpha_t := \alpha_{t-1} + \eta \langle g_t, g_{t-1} \rangle$, where $g_{t-1}$ represents the weight update in the previous step. To proceed, we assume that the gradient of $f(x)$ is Lipschitz-continuous with a constant $L$. This assumption implies that for any $x, y \in \mathbb{R}^d$, the gradient difference can be bounded as $\| \nabla f(x) - \nabla f(y) \|_2 \leq L\| x - y \|_2$, which leads an obvious inequality
\begin{align*}
    f(x) \leq f(y) + \langle f(y), x - y \rangle + \frac{L}{2} \| x - y \|_2^2
\end{align*}
Hence, we can now analyze step $t$ ($t \in [T]$) of the optimization process. Suppose we fix $x_t$ and denote $x_{t+1} := x_t - \alpha_{t-1} g_t$. Additionally, let $x_{t+1}' = x_t - \alpha_t g_t$. We can compute the following inequalities:
\begin{align*}
    f(x_{t+1}')
    \leq & ~ f(x_{t+1}) + \langle \nabla f(x_{t+1}), x_{t+1}' - x_{t+1} \rangle + D \\
    \leq & ~ f(x_{t+1}) + \langle \nabla f(x_{t+1}), (\alpha_t - \alpha_{t-1}) g_t \rangle + D \\
    \leq & ~ f(x_{t+1}) - \langle \nabla f(x_{t+1}), \eta \langle g_{t}, g_{t-1} \rangle g_t \rangle + D \\
     \leq & ~ f(x_{t+1}) - \eta \langle g_{t}, g_{t-1} \rangle \langle \nabla f(x_{t+1}), g_t \rangle + D
\end{align*}
where $D := \frac{L}{2} \| x_{t+1}' - x_{t+1} \|_2^2$. From the above inequalities, we can observe that to ensure $f(x_{t+1}') \leq f(x_{t+1})$, we need the term $- \eta \langle g_{t}, g_{t-1} \rangle \langle \nabla f(x_{t+1}), g_t \rangle + \frac{L}{2} \| x_{t+1}' - x_{t+1} \|_2^2$ to be negative. Since $\frac{L}{2} \| x_{t+1}' - x_{t+1} \|_2^2 \geq 0$, the condition simplifies to $\eta \langle g_{t}, g_{t-1} \rangle \langle \nabla f(x_{t+1}), g_t \rangle \geq 0$. Moreover, the following equality holds:
\begin{align*}
    \langle g_{t}, g_{t-1} \rangle \langle \nabla f(x_{t+1}), g_t \rangle
    = & ~ \langle g_{t}, g_{t-1} \rangle \langle \E[g_{t+1}], g_t \rangle \\
    = & ~ \E[\langle g_{t}, g_{t-1} \rangle \langle g_{t+1}, g_t \rangle]
\end{align*}
In light of the proven loss descent bound, we only apply the delta-bar-delta algorithm when $\langle g_{t}, g_{t-1} \rangle \langle \nabla g_{t+1}, g_t \rangle \geq 0$. However, since we only get $g_{t+1}$ in the step $t+1$, but the update of $\alpha_t$ will be already updated at that time. In other words, we could not verify the unbiasedness of $t$-th learning rate update until we have $g_{t+1}$. So we give the algorithm a reversible ability: At step $t+1$, before updating the learning rates and weights, we incorporate a verification process. Specifically, we examine whether the condition $\langle g_{t}, g_{t-1} \rangle \langle \nabla g_{t+1}, g_t \rangle < 0$ holds. If this condition is satisfied, we proceed with the following update steps:
\begin{align*}
    & ~ x_{t} \leftarrow x_{t} + \eta \langle g_{t}, g_{t-1} \rangle g_{t-1} \\
    & ~ \alpha_{t} \leftarrow \alpha_{t} - \eta \langle g_{t}, g_{t-1} \rangle
\end{align*}
The above operation is used to correct the bias update of the learning rate in the previous step, which is referred to as the "Regrettable" algorithm because it can reverse the update when it is determined to be biased. The term "Regrettable" accurately reflects the nature of this algorithm and its ability to address biased updates.

In Section~\ref{sub:defs}, we formally provide the definitions and the algorithm of our RDBD algorithm. In Section~\ref{sub:steeper_descent}, we show our result that confirms the RDBD algorithm can improve convergence speed. In Section~\ref{sub:convergence}, we confirm our result that proves the convergence of the RDBD algorithm.

\subsection{Definitions and Algorithm}\label{sub:defs}

Here, we first define the dynamical learning rate $\alpha_t$ at step $t$.
\begin{definition}
    At $t$ step, given the parameter update $g_t \in \R^d$ of this step and parameter update $g_{t-1} \in \R^d$ of the previous step, we run RDBD on with loss function $f(x)$ with parameter $x \in \R^d$ and learning rates $\alpha$ and $\eta$, we optimize
    \begin{itemize}
        \item {\bf Part 1.} denote $g_{t+1} \in \R^d$ as the weight update in step $t+1$, if $\langle g_{t+1}, g_t \rangle \langle g_{t}, g_{t-1} \rangle \geq 0$, we have
        \begin{align*}
            \alpha_t = \alpha_{t-1} + \eta \langle g_{t}, g_{t-1} \rangle
        \end{align*}
        \item {\bf Part 2.} denote $g_{t+1} \in \R^d$ as the weight update in step $t+1$, if $\langle g_{t+1}, g_t \rangle \langle g_{t}, g_{t-1} \rangle < 0$, we have
        \begin{align*}
            \alpha_t = \alpha_{t-1}
        \end{align*}
    \end{itemize}
    for all $g_t$, $g_t$, $g_{t+1}$ is an unbiased estimator of gradient of $f(x_t)$ as $\E[g_t] = \nabla f(x_t)$.
\end{definition}

Next, we define our Regrettable Delta-Bar-Delta algorithm as follows:
\begin{definition}
    At $t$ step, given the parameter update $g_t \in \R^d$ of this step and parameter update $g_{t-1} \in \R^d$ of the previous step, we run RDBD on with loss function $f(x)$ with parameter $x \in \R^d$ and learning rates $\alpha$ and $\eta$, we optimize
    \begin{itemize}
        \item {\bf Part 1.} denote $g_{t+1} \in \R^d$ as the weight update in step $t+1$, if $g_{t-1}, g_t, g_{t+1}$ satisfy that $\langle g_{t+1}, g_t \rangle \langle g_{t}, g_{t-1} \rangle \geq 0$
        \begin{align*}
            x_{t+1}^{\rm RDBD} = & ~ x_{t} - ( \alpha + \eta g_t^\top g_{t-1}) \cdot g_t \\
            = & ~ x_{t} - \alpha g_t - \eta g_t^\top g_{t-1} g_t
        \end{align*}
        \item {\bf Part 2.} denote $g_{t+1} \in \R^d$ as the weight update in step $t+1$, if $g_{t-1}, g_t, g_{t+1}$ satisfy that $\langle g_{t+1}, g_t \rangle \langle g_{t}, g_{t-1} \rangle < 0$
        \begin{align*}
            x_{t+1}^{\rm RDBD} = & ~ x_{t} - \alpha g_t
        \end{align*}
    \end{itemize}
    for all $g_t$, $g_t$, $g_{t+1}$ is an unbiased estimator of gradient of $f(x_t)$ as $\E[g_t] = \nabla f(x_t)$.
\end{definition}

We provide the Regrettable Delta-Bar-Delta algorithm in Algorithm~\ref{alg:RDBD}.

\begin{algorithm}[!ht]\caption{Regrettable delta-bar-delta algorithm (RDBD)}\label{alg:RDBD}
\begin{algorithmic}[1]
\Statex \textbf{Input: } Function $f \in \mathcal{F}, f(x): \R^d \rightarrow \R^d$, parameters $x_0 \in \R^d$, initial learning rate $\alpha_0$, learning rate for step-size optimization $\eta$

\State $g_0 \leftarrow {\bf 0}_d$

\State $h_0 \leftarrow 0$

\State $t \leftarrow 0$

\While{$x_t$ not converged}

\State $t \leftarrow t + 1$

\State $g_t \leftarrow \nabla f_{\xi_t}(x_t)$

\State $h_t \leftarrow \langle g_t, g_{t-1} \rangle$

\If{$h_t \cdot h_{t-1} < 0$}

\State $x_{t-1} \leftarrow x_{t-1} + \eta h_{t-1} g_{t-1}$

\State $\alpha_{t-1} \leftarrow \alpha_{t-1} - \eta h_{t-1}$

\EndIf

\State $\alpha_t \leftarrow \alpha_{t-1} + \eta h_t$

\State $x_t \leftarrow x_{t-1} - \alpha_t g_t$

\EndWhile

\end{algorithmic}
\end{algorithm}

\subsection{Steeper Descent Guarantee}\label{sub:steeper_descent}

We provide our result lemma that confirms the faster loss descent when applying the RDBD algorithm as the learning rate schedule algorithm below.
\begin{lemma}[Steeper descent guarantee of RDBD, informal version of Lemma~\ref{lem:steeper_descent:formal}]\label{lem:steeper_descent:informal}
    Suppose given learning rates $\alpha$ and $\eta$, given loss function $f(x)$ with parameters $x$. Suppose $\nabla f(x)$ is Lipschitz-smooth with constant $L$ $\| \nabla f(x) - \nabla f(y) \|_2 \leq L \| x - y \|, \forall x, y \in \R^d$ and weight update in the $t$-th step $g_t$ is $\sigma$-bounded that $\| g_t \|_2 \leq \sigma, \forall t \in [T]$. Suppose $g_t$ is unbiased gradient estimator that $\E[g_t] = \nabla f(x_t)$ for $t \in [T]$ and $\langle g_t, \nabla f(x_t) \rangle \geq \mu \| g_t \|_2^2$.

    At $t$ step optimization, given weight update $g_t$ of $t$ step, previous step weight update $g_{t-1}$ of $t-1$ step, next step weight update $g_{t+1}$ of $t+1$ step. 
    
    Let $x_{t+1} = x_t - \alpha g_t$, let $x_{t+1}^{\rm RDBD} = x_{t} - \alpha g_t - \eta \langle g_t, g_{t-1} \rangle g_t$. 
    
    
    We can show that
    \begin{itemize}
        \item {\bf Part 1.} When $\langle g_{t+1}, g_t \rangle \langle g_{t}, g_{t-1} \rangle \geq 0$, and if $\eta \leq \frac{2 }{L \sigma^2 }$, we have
        \begin{align*}
            f(x_{t+1}^{\rm RDBD}) \leq f(x_{t+1})
        \end{align*}
        \item {\bf Part 2.} When $\langle g_{t+1}, g_t \rangle \langle g_{t}, g_{t-1} \rangle < 0$, we have
        \begin{align*}
            f(x_{t+1}^{\rm RDBD}) = f(x_{t+1})
        \end{align*}
        \item {\bf Part 3.} When $\alpha \leq \frac{2 \mu }{L}$, we have
        \begin{align*}
            f(x_{t+1}) \leq f(x_t)
        \end{align*}
    \end{itemize}
\end{lemma}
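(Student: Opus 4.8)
The plan is to handle the three parts in increasing order of difficulty. Part~2 is immediate: when $\langle g_{t+1},g_t\rangle\langle g_t,g_{t-1}\rangle<0$ the algorithm discards the tentative learning-rate increment, so by the definition of the RDBD iterate we have $x_{t+1}^{\rm RDBD}=x_t-\alpha g_t=x_{t+1}$; the two points literally coincide, so $f(x_{t+1}^{\rm RDBD})=f(x_{t+1})$ with nothing to estimate. For Part~3 I would apply the descent inequality implied by $L$-smoothness with anchor $y=x_t$ and displacement $-\alpha g_t$,
\[ f(x_{t+1}) \le f(x_t) - \alpha\langle\nabla f(x_t), g_t\rangle + \frac{L}{2}\alpha^2\|g_t\|_2^2, \]
then substitute the curvature hypothesis $\langle\nabla f(x_t),g_t\rangle\ge\mu\|g_t\|_2^2$ to obtain $f(x_{t+1})\le f(x_t)+(\tfrac{L}{2}\alpha^2-\mu\alpha)\|g_t\|_2^2$. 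The bracketed coefficient is nonpositive exactly when $\alpha\le 2\mu/L$, which is the stated hypothesis, giving $f(x_{t+1})\le f(x_t)$.

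The substance is Part~1. I would again invoke the descent inequality, this time anchored at $y=x_{t+1}$ and evaluated at $x=x_{t+1}^{\rm RDBD}=x_{t+1}-\eta A g_t$ with $A:=\langle g_t,g_{t-1}\rangle$; the displacement $-\eta A g_t$ produces
\[ f(x_{t+1}^{\rm RDBD}) \le f(x_{t+1}) - \eta A\,\langle\nabla f(x_{t+1}), g_t\rangle + \frac{L}{2}\eta^2 A^2\|g_t\|_2^2. \]
The decisive move is to rewrite the first-order term using the unbiasedness of the estimator: since $\E[g_{t+1}]=\nabla f(x_{t+1})$ we have $A\,\langle\nabla f(x_{t+1}),g_t\rangle=\E[\langle g_t,g_{t-1}\rangle\langle g_{t+1},g_t\rangle]$, and the Part~1 sign hypothesis $\langle g_{t+1},g_t\rangle\langle g_t,g_{t-1}\rangle\ge 0$ forces this quantity to be nonnegative. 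Hence the first-order contribution $-\eta A\,\langle\nabla f(x_{t+1}),g_t\rangle$ is $\le 0$ and pushes the bound in the desired direction.

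The crux — and the step I expect to be the main obstacle — is absorbing the positive smoothness penalty $\tfrac{L}{2}\eta^2 A^2\|g_t\|_2^2$ into this favourable first-order term. Here I would use the boundedness hypotheses $\|g_t\|_2\le\sigma$ and, via Cauchy–Schwarz, $|A|\le\|g_t\|_2\|g_{t-1}\|_2\le\sigma^2$, together with the step-size cap $\eta\le 2/(L\sigma^2)$, to show the quadratic penalty is dominated. Concretely, I would group the correction as $\eta A\big(\tfrac{L}{2}\eta A\|g_t\|_2^2-\langle\nabla f(x_{t+1}),g_t\rangle\big)$ and argue that the parenthesis carries the sign opposite to $A$ under the step-size budget, so that the product is $\le 0$ and $f(x_{t+1}^{\rm RDBD})\le f(x_{t+1})$ follows. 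This is where the genuine care is needed: the sign condition by itself only fixes the sign of the first-order term, and it is precisely the bound $\eta\le 2/(L\sigma^2)$ that must be shown to keep the quadratic smoothness term below it, rather than simply discarding that nonnegative term as the informal motivation does.
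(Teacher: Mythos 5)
Your Parts 2 and 3 are correct and coincide with the paper's own argument: Part 2 is purely definitional ($x_{t+1}^{\rm RDBD}=x_t-\alpha g_t=x_{t+1}$), and Part 3 is the smoothness descent inequality combined with $\langle\nabla f(x_t),g_t\rangle\geq\mu\|g_t\|_2^2$ and $\alpha\leq 2\mu/L$, exactly as in the paper.

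The gap is at the step you yourself flag as the crux, and your proposed resolution does not go through. From the stated hypotheses, the sign condition only fixes the \emph{sign} of the first-order term, i.e.\ $\eta A\,\langle\nabla f(x_{t+1}),g_t\rangle\geq 0$ with $A:=\langle g_t,g_{t-1}\rangle$; it gives no lower bound on $|\langle\nabla f(x_{t+1}),g_t\rangle|$. Consequently the grouping $\eta A\bigl(\tfrac{L}{2}\eta A\|g_t\|_2^2-\langle\nabla f(x_{t+1}),g_t\rangle\bigr)$ cannot be shown nonpositive: take $\langle\nabla f(x_{t+1}),g_t\rangle=0$ (perfectly consistent with the sign hypothesis) and $A\neq 0$; then the parenthesis equals $\tfrac{L}{2}\eta A\|g_t\|_2^2$, which has the \emph{same} sign as $A$, and your bound degrades to $f(x_{t+1}^{\rm RDBD})\leq f(x_{t+1})+\tfrac{L}{2}\eta^2A^2\|g_t\|_2^2$, which is strictly worse than what you must prove. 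The cap $\eta\leq 2/(L\sigma^2)$ only yields $\tfrac{L}{2}\eta|A|\,\|g_t\|_2^2\leq|A|$, which is useless without a matching lower bound on the inner product. The paper's formal version (Lemma~\ref{lem:steeper_descent:formal}) closes precisely this hole by adding a hypothesis that is absent from the informal statement you were given: $\min_t\{\langle g_{t+1},g_t\rangle\}\geq\tau>0$, from which it derives $|\langle\nabla f(x_{t+1}),g_t\rangle|\geq\frac{\tau}{\sigma^2}\|g_t\|_2^2$ and absorbs the quadratic penalty when $\tfrac{L}{2}\eta|A|\leq\tau/\sigma^2$. (Two caveats that apply to the paper as well: its absorption step really requires $\eta\,|g_t^\top g_{t-1}|\leq 2\tau/(L\sigma^2)$, which does not follow from $\eta\leq 2/(L\sigma^2)$ alone since $|g_t^\top g_{t-1}|$ may exceed $\tau$; and both you and the paper pass from the realized sign condition on the sampled $g_{t+1}$ to a statement about $\nabla f(x_{t+1})=\E[g_{t+1}]$, which is only licensed if the sign condition holds for all realizations, not just the observed one.) So your instinct that this step needed genuine care was right, but without a $\tau$-type lower bound the inequality you need at the crux is simply false under the listed assumptions, and no amount of step-size tuning repairs it.
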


In Lemma~\ref{lem:steeper_descent:informal}, we demonstrate the steeper descent guarantee of our method that improves the convergence speed. Besides, we provide rigorous bounds for the variables $\eta$ and $\alpha_t$ during the training process, enabling more informed decisions regarding the initial value of $\alpha_0$ and $\eta$. For the proof of Lemma~\ref{lem:steeper_descent:formal}, please refer to Appendix~\ref{sub:steeper_descent_proof}.

\subsection{Minimizing Guarantee}\label{sub:convergence}

We present our results that confirm the convergence of the Regrettable Delta-Bar-Delta (RDBD) algorithm in mini-batch optimization. The convergence guarantee is stated informally as follows:

\begin{theorem}[Convergence guarantee for Regrettable Delta-Bar-Delta algorithm in mini-batch optimization, informal version of Theorem~\ref{thm:convergence_rdbd:formal}]\label{thm:convergence_rdbd:informal}
    Suppose $\nabla f(x)$ is Lipschitz-smooth with constant $L$ $\| \nabla f(x) - \nabla f(y) \|_2 \leq L \| x - y \|, \forall x, y \in \R^d$ and weight update in the $t$-th step $g_t$ is $\sigma$-bounded that $\| g_t \|_2 \leq \sigma, \forall t \in [T]$. Suppose $g_t$ is unbiased gradient estimator that $\E[g_t] = \nabla f(x_t)$ for $t \in [T]$. We initialize $\alpha_0 = \frac{\sqrt{f(x_0) - f^*}}{\sigma\sqrt{L T}}$, $\eta = \frac{\gamma\sqrt{f(x_0) - f^*}}{T\sigma^3\sqrt{L T}}$ where $\gamma \in (0, 1)$ is denoted as a scalar. We run the  Regrettable Delta-Bar-Delta algorithm at most
    \begin{align*}
        T = \frac{1}{\epsilon^2} \cdot \sigma \sqrt{L(f(x_0) - f^*)} (\frac{1}{1- \gamma} + \frac{1}{2} ( 1 + \gamma )) 
    \end{align*}
    times iterations, we have
    \begin{align*}
        \min_t \{ \| \nabla f(x_t) \|_2 \} \leq \epsilon
    \end{align*}
\end{theorem}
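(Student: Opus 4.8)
The plan is to run the standard ``descent lemma plus telescoping'' argument for nonconvex stochastic optimization, with the whole proof hinging on one structural observation: under the stated initialization the adaptive step size $\alpha_t$ never moves far from its initial value, so it behaves like an essentially fixed step size of order $T^{-1/2}$, and the delta-bar-delta increments contribute only lower-order terms. All error bounds will be in expectation, using $\E[g_t]=\nabla f(x_t)$; the informal conclusion is then read off the resulting bound on $\E\|\nabla f(x_t)\|_2^2$.

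First I would pin down a uniform two-sided bound on $\alpha_t$. Since $\|g_t\|_2\le\sigma$, Cauchy--Schwarz gives $|\langle g_t,g_{t-1}\rangle|\le\sigma^2$, so each update changes the step size by at most $\eta\sigma^2$. With the chosen $\eta$ and $\alpha_0$ one checks the clean identity $\eta\sigma^2=\frac{\gamma}{T}\alpha_0$. In the Part~2 (regrettable) case the step size is left unchanged, and in the Part~1 case it moves by at most $\frac{\gamma}{T}\alpha_0$; accumulating over at most $T$ steps gives $|\alpha_t-\alpha_0|\le\gamma\alpha_0$, hence
\[
(1-\gamma)\alpha_0 \;\le\; \alpha_t \;\le\; (1+\gamma)\alpha_0, \qquad t\in[T].
\]
In particular $\alpha_t$ is bounded away from $0$, which is what makes the telescoped sum informative.

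Next I would write the per-step expected descent. Applying $L$-smoothness to the update $x_{t+1}=x_t-\alpha_t g_t$ gives
\[
f(x_{t+1}) \le f(x_t) - \alpha_t\langle \nabla f(x_t), g_t\rangle + \tfrac{L}{2}\alpha_t^2\|g_t\|_2^2 .
\]
The delicate point --- and the main obstacle --- is that $\alpha_t$ depends on $g_t$ through $\langle g_t,g_{t-1}\rangle$, so I cannot pull $\alpha_t$ out of the conditional expectation and directly invoke unbiasedness. I would resolve this by decomposing $\alpha_t=\alpha_{t-1}+\eta\langle g_t,g_{t-1}\rangle$, where $\alpha_{t-1}$ is measurable with respect to the history through step $t-1$. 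Taking the conditional expectation, the $\alpha_{t-1}$ piece yields exactly $\alpha_{t-1}\|\nabla f(x_t)\|_2^2$, while the residual $\eta\,\E[\langle g_t,g_{t-1}\rangle\langle \nabla f(x_t),g_t\rangle]$ is bounded in magnitude by $\eta\sigma^4$ via the $\sigma$-bounds, and the smoothness term is controlled by $\tfrac{L}{2}(1+\gamma)^2\alpha_0^2\sigma^2$. The key check is that the residual cross term is genuinely lower order: summed over $T$ steps it equals $T\eta\sigma^4=\gamma\alpha_0\sigma^2$, which is of order $T^{-1/2}$ relative to $f(x_0)-f^*$ and therefore negligible after dividing by $\sum_t\alpha_{t-1}$, a quantity of order $\sqrt{T}$.

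Finally I would telescope over $t=1,\dots,T$ and take total expectation, obtaining
\[
\sum_{t=1}^T \alpha_{t-1}\,\E\|\nabla f(x_t)\|_2^2 \;\le\; (f(x_0)-f^*) + \tfrac{L\sigma^2}{2}\sum_{t=1}^T\alpha_t^2 + \gamma\alpha_0\sigma^2 .
\]
Bounding $\min_t\E\|\nabla f(x_t)\|_2^2$ by the $\alpha_{t-1}$-weighted average, using $\sum_t\alpha_{t-1}\ge(1-\gamma)T\alpha_0$ on the first term and $\frac{\sum_t\alpha_t^2}{\sum_t\alpha_{t-1}}\le\max_t\alpha_t\le(1+\gamma)\alpha_0$ (up to a negligible boundary term) on the smoothness term, absorbing the $\gamma\alpha_0\sigma^2$ residual into the lower-order $O(1/T)$, and substituting $\alpha_0=\frac{\sqrt{f(x_0)-f^*}}{\sigma\sqrt{LT}}$ collapses the right-hand side to
\[
\min_{t\in[T]}\E\|\nabla f(x_t)\|_2^2 \;\le\; \sigma\sqrt{\tfrac{L(f(x_0)-f^*)}{T}}\,\Big(\tfrac{1}{1-\gamma}+\tfrac12(1+\gamma)\Big),
\]
where the first summand tracks the optimization term $\tfrac{f(x_0)-f^*}{\sum_t\alpha_{t-1}}$ and the second tracks the smoothness/noise term, reproducing the constant $C=\frac{1}{1-\gamma}+\frac12(1+\gamma)$ of the statement. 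Rearranging this inequality to solve for the iteration count that forces the right-hand side below the target accuracy then delivers the claimed bound on $T$ and the conclusion $\min_t\|\nabla f(x_t)\|_2\le\epsilon$. The only genuinely nontrivial steps are the uniform control of $\alpha_t$ and the decoupling of $\alpha_t$ from $g_t$ inside the expectation; everything else is bookkeeping.
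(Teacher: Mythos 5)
Your proposal takes essentially the same route as the paper's proof of Theorem~\ref{thm:convergence_rdbd:formal}: the smoothness descent inequality applied to $x_{t+1}=x_t-\alpha_t g_t$, the two-sided control $(1-\gamma)\alpha_0\le\alpha_t\le(1+\gamma)\alpha_0$ (which is the paper's Lemma~\ref{lem:bound_alpha} combined with the identity $T\eta\sigma^2=\gamma\alpha_0$), telescoping, and bounding the minimum by the average, landing on the same constant $C:=\frac{1}{1-\gamma}+\frac{1}{2}(1+\gamma)$ and the same $O(T^{-1/2})$ bound on $\min_t\E\|\nabla f(x_t)\|_2^2$. Where you diverge, you are actually more careful than the paper: in Eq.~\eqref{eq:ineq_e_f_x_t_add_1_RDBD} the paper pulls $\alpha_t$ outside the expectation as if it were deterministic, even though $\alpha_t$ contains $\eta\langle g_t,g_{t-1}\rangle$ (and, through the sign test of Definition~\ref{def:alpha_t}, even depends on $g_{t+1}$); your decomposition $\alpha_t=\alpha_{t-1}+\eta\langle g_t,g_{t-1}\rangle$ with the cross term bounded by $\eta\sigma^4$ and summing to the lower-order quantity $T\eta\sigma^4=\gamma\alpha_0\sigma^2$ repairs precisely this silent step. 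Two caveats. First, your claim that $\alpha_{t-1}$ is measurable with respect to the history through step $t-1$ is not literally true for RDBD: whether the increment $\eta\langle g_{t-1},g_{t-2}\rangle$ is retained in $\alpha_{t-1}$ is decided only after $g_t$ is observed (the reversal in Algorithm~\ref{alg:RDBD}), so you must peel off one further increment; it obeys the same $\eta\sigma^4$ bound, merely doubling the residual, so nothing breaks. Second, your closing sentence (``rearranging delivers the claimed bound on $T$'') inherits an algebra slip already present in the paper's final display: from $\min_t\E\|\nabla f(x_t)\|_2^2\le \sigma\sqrt{L(f(x_0)-f^*)/T}\cdot C$, forcing $\min_t\|\nabla f(x_t)\|_2\le\epsilon$ requires $T$ of order $\epsilon^{-4}\sigma^2 L(f(x_0)-f^*)C^2$, the square of the stated count; with the stated $T=\epsilon^{-2}\sigma\sqrt{L(f(x_0)-f^*)}\,C$ the right-hand side evaluates to $\epsilon^2\sqrt{T}$ rather than $\epsilon^2$. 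Since your displayed inequality matches the paper's Eq.~\eqref{eq:upper_bound_e_nabla_f} exactly, this mismatch is a defect of the theorem statement rather than of your argument, and your derivation up to that point is sound.
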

This theorem establishes the convergence of the RDBD algorithm in mini-batch optimization, providing a clear and professional statement of the result. Please see Appendix~\ref{sub:convergence_proof} for the proof of Theorem~\ref{thm:main_result:informal}.

\begin{figure*}
    \centering
    \includegraphics[width=\textwidth]{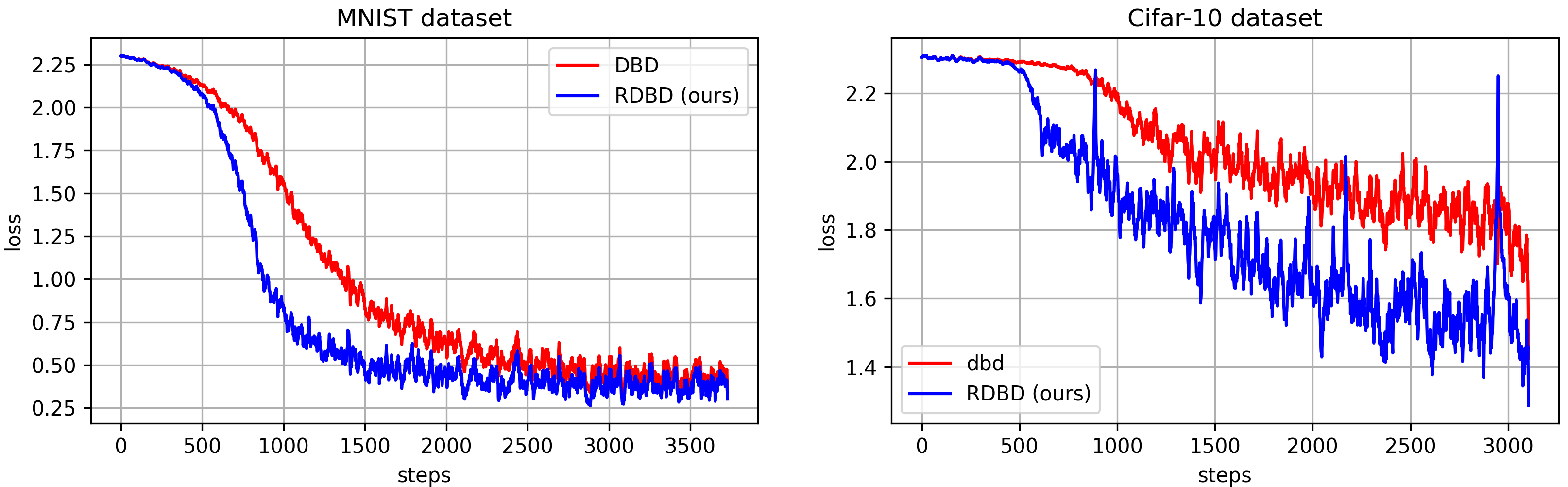}
    \caption{Comparison of the DBD algorithm and the RDBD algorithm on the Cifar-10 dataset and the MNIST dataset. }
    \label{fig:reduce}
\end{figure*}

\section{Experimental Results}\label{sec:experiment}

In this section, we provide a thorough overview of our experimental findings, which are categorized into two distinct sections to enhance clarity and organization. Section~\ref{sub:faster_speed} primarily emphasizes the accelerated convergence achieved through our method. We demonstrate the improved speed of convergence in comparison to alternative approaches. Section~\ref{sub:comparison_dbd_rdbd} delves into the ability of the RDBD algorithm to mitigate biased updates to the learning rate. We accomplish this by comparing the training loss of our algorithm with that of the delta-bar-delta algorithm. This comprehensive analysis offers valuable insights into the performance and effectiveness of our approach, contributing to a more comprehensive understanding of its practical implications. By presenting these results, we aim to provide a professional and coherent account of our experimental findings, ensuring clarity and facilitating a deeper understanding of our methodology.

For the details and more results of our experiment, please see Appendix~\ref{app:experiment}.

\subsection{Accelerating Convergence Speed of Adam and SGD with RDBD Algorithm}\label{sub:faster_speed}

In this experiment, we investigate the effectiveness of applying the RDBD learning rate schedule algorithm to both Adam and SGD optimization methods, and evaluate their performance on two widely used datasets: MNIST and Cifar-10.

To analyze the impact of the RDBD algorithm on the convergence speed of these optimization methods, we present the results in Figure~\ref{fig:main_result}. Specifically, we focus on the initial stages of the training process, examining the loss reduction achieved within the first 3750 steps for the MNIST dataset and the first 3125 steps for the Cifar-10 dataset.

From the observations made in Figure~\ref{fig:main_result}, it becomes evident that the RDBD algorithm plays a pivotal role in significantly accelerating the convergence speed of both SGD and Adam optimization methods. The reduction in loss achieved within the initial steps of training demonstrates the effectiveness of the RDBD algorithm in facilitating quicker convergence and improving overall optimization performance. Our results highlight the potential of the RDBD algorithm as a valuable learning rate schedule algorithm for enhancing the convergence speed in the context of optimization.

\subsection{Reducing Biased Learning Rate Updates with RDBD Algorithm}\label{sub:comparison_dbd_rdbd}

This experimental study aims to evaluate the effectiveness of our RDBD algorithm in mitigating biased learning rate updates when utilizing the delta-bar-delta algorithm for learning rate scheduling. Additionally, we focus on assessing the impact of these updates on loss reduction within the initial stages of training on the MNIST and Cifar-10 datasets.

Figure~\ref{fig:reduce} visually presents the results, indicating that the RDBD algorithm significantly suppresses inaccurate learning rate updates. As a consequence, it enables a more pronounced descent during the training process, leading to improved convergence.





\section{Conclusion}

In this study, we propose the Regrettable Delta-Bar-Delta (RDBD) algorithm as a novel learning rate schedule method. Our RDBD algorithm builds upon the classical delta-bar-delta algorithm, aiming to address the convergence issue caused by noise gradient in mini-batch optimization. By promptly reverting incorrect learning rate updates, RDBD facilitates a steeper descent of loss and faster convergence.

To establish the effectiveness of RDBD, we provide rigorous proofs and outline the conditions under which it achieves a steeper descent and faster convergence. Through comprehensive experiments, we demonstrate that the RDBD algorithm serves as an excellent learning rate schedule technique. It seamlessly integrates with any optimization algorithm, allowing for plug-and-play usage and direct acceleration of convergence.

The results of our study highlight the potential of the RDBD algorithm to significantly enhance the training performance of deep learning models. By mitigating the impact of noise gradient and enabling more precise learning rate updates, RDBD offers a promising approach for improving the efficiency and effectiveness of optimization algorithms.

\ifdefined\isarxiv

\else
\bibliography{ref}
\bibliographystyle{plainnat}
\section*{Checklist}

 \begin{enumerate}

 \item For all models and algorithms presented, check if you include:
 \begin{enumerate}
   \item A clear description of the mathematical setting, assumptions, algorithm, and/or model. [Yes]
   \item An analysis of the properties and complexity (time, space, sample size) of any algorithm. [Yes]
   \item (Optional) Anonymized source code, with specification of all dependencies, including external libraries. [Yes]
 \end{enumerate}

 \item For any theoretical claim, check if you include:
 \begin{enumerate}
   \item Statements of the full set of assumptions of all theoretical results. [Yes]
   \item Complete proofs of all theoretical results. [Yes]
   \item Clear explanations of any assumptions. [Yes]     
 \end{enumerate}

 \item For all figures and tables that present empirical results, check if you include:
 \begin{enumerate}
   \item The code, data, and instructions needed to reproduce the main experimental results (either in the supplemental material or as a URL). [Not Applicable]
   \item All the training details (e.g., data splits, hyperparameters, how they were chosen). [Not Applicable]
         \item A clear definition of the specific measure or statistics and error bars (e.g., with respect to the random seed after running experiments multiple times). [Not Applicable]
         \item A description of the computing infrastructure used. (e.g., type of GPUs, internal cluster, or cloud provider). [Not Applicable]
 \end{enumerate}

 \item If you are using existing assets (e.g., code, data, models) or curating/releasing new assets, check if you include:
 \begin{enumerate}
   \item Citations of the creator If your work uses existing assets. [Not Applicable]
   \item The license information of the assets, if applicable. [Not Applicable]
   \item New assets either in the supplemental material or as a URL, if applicable. [Not Applicable]
   \item Information about consent from data providers/curators. [Not Applicable]
   \item Discussion of sensible content if applicable, e.g., personally identifiable information or offensive content. [Not Applicable]
 \end{enumerate}

 \item If you used crowdsourcing or conducted research with human subjects, check if you include:
 \begin{enumerate}
   \item The full text of instructions given to participants and screenshots. [Not Applicable]
   \item Descriptions of potential participant risks, with links to Institutional Review Board (IRB) approvals if applicable. [Not Applicable]
   \item The estimated hourly wage paid to participants and the total amount spent on participant compensation. [Not Applicable]
 \end{enumerate}

 \end{enumerate}
\fi

\newpage
\onecolumn
\appendix

\section*{Appendix}

{\bf Roadmap.} We present a number of notations and definitions in Appendix~\ref{app:preli}. In Appendix~\ref{sec:convergence_dbd}, presents a basic derivation and convergence proof of the Delta-Bar-Delta Algorithm. We present proofs for the Regrettable Delta-Bar-Delta Algorithm in Appendix~\ref{app:main_proof}. We present more experiments in Appendix~\ref{app:experiment}.

\section{Preliminary}\label{app:preli}

In Appendix~\ref{sub:notations}, we provide the notation we use in this paper. In Appendix~\ref{sub:def_ass}, we provide some formal definition and assumption for our proofs.

\subsection{Notations}\label{sub:notations}

In this paper, we adopt the following notations: The number of dimensions of a neural network is denoted by $d$. The set of real numbers is represented by $\R$. A vector with $d$ elements, where each entry is a real number, is denoted as $x \in \mathbb{R}^{d}$. The weight vector parameter in a neural network is denoted as $x$. We use $x$ to denote the weight of a vector parameter in a neural network. For any positive integer $n$, we use $[n]$ to denote $\{1,2,\cdots, n\}$. The $\ell_p$ norm of a vector $x$ is denoted as $\| x \|_p$, for examples, $\| x \|_1 := \sum^n_{i=1} | x_i |$, $\| x \|_2 := ( \sum^n_{i=1} x_i^2 )^{1/2}$ and $\| x \|_\infty := \max_{i \in [n]} | x_i |$. For two vectors $x, y \in \R^d$, we denote $\langle x, y \rangle = \sum^n_{i=1}$ for $i \in [d]$. The loss function of a weight vector parameter $x \in \mathbb{R}^d$ on the entire dataset is denoted as $f: \mathbb{R}^d \rightarrow \mathbb{R}$. Specifically, $f_\xi(x)$ represents the loss function of $x$ on the batch data $\xi$. The learning rate during training is denoted as $\alpha$. In particular, $\eta$ is used to represent the learning rate for updating $\alpha$ in the delta-bar-delta algorithm. We consider the optimization process as consisting of $T$ update steps. For an integer $t \in [T]$, at step $t$, we have the weight vector parameter $x_t$ and the learning rate $\alpha_t$. We use $\nabla f(x)$ to denote $\frac{\partial f(x)}{\partial x}$. We use $\E[]$ to denote expectation. We use $\Pr[]$ to denote the probability.

\subsection{Definition and Assumptions}\label{sub:def_ass}

\begin{definition}\label{def:f_xi}
    Given a block of loss function $f: \R^d \rightarrow \R^d$, suppose we sample batch data $\xi$ from training set, we denote loss of $f$ with batch data $\xi$ and parameters $x$ as $f_\xi(x)$.
\end{definition}

\begin{definition}\label{def:f}
    Given a block of loss function $f: \R^d \rightarrow \R^d$, suppose we sample batch data $\xi$ from training set, we denote loss of $f$ with all data in training set as follows:
    \begin{align*}
        f(x) := \frac{1}{n} \cdot \sum_{i=1}^n f_\xi(x)
    \end{align*}
    where $n$ is number of $\xi$ in training set.
\end{definition}

\begin{definition}\label{def:g}
    Given a block of loss function $f: \R^d \rightarrow \R^d$, let $f(x)$ be defined as Definition~\ref{def:f}, at $t$ step, we have parameter $x_t \in \R^d$, if a weight update $g_t$ satisfies that $\E[g_t] = \nabla f(x_t)$, then we say $g_t$ is an unbiased estimator of gradient of $f(x_t)$.
\end{definition}

\begin{definition}\label{def:rdbd}
    At $t$ step, given the parameter update $g_t \in \R^d$ of this step and parameter update $g_{t-1} \in \R^d$ of the previous step, we run RDBD on with loss function $f(x)$ with parameter $x \in \R^d$ and learning rates $\alpha$ and $\eta$, we optimize
    \begin{itemize}
        \item {\bf Part 1.} given $g_{t+1} \in \R^d$, if $g_{t-1}, g_t, g_{t+1}$ satisfy that $\langle g_{t+1}, g_t \rangle \langle g_{t}, g_{t-1} \rangle \geq 0$
        \begin{align*}
            x_{t+1}^{\rm RDBD} = & ~ x_{t} - ( \alpha + \eta g_t^\top g_{t-1}) \cdot g_t \\
            = & ~ x_{t} - \alpha g_t - \eta g_t^\top g_{t-1} g_t
        \end{align*}
        \item {\bf Part 2.} given $g_{t+1} \in \R^d$, if $g_{t-1}, g_t, g_{t+1}$ satisfy that $\langle g_{t+1}, g_t \rangle \langle g_{t}, g_{t-1} \rangle < 0$
        \begin{align*}
            x_{t+1}^{\rm RDBD} = & ~ x_{t} - \alpha g_t
        \end{align*}
    \end{itemize}
    for all $g_t$, $g_t$ is an unbiased estimator of gradient of $f(x_t)$ as Definition~\ref{def:g}.
\end{definition}

\begin{assumption}\label{ass:lipschitz}
    We assume that loss function is Lipschitz-smooth with constant $L$, which is
    \begin{align*}
        \| \nabla f(x) - \nabla f(y) \|_2 \leq L \cdot \| x - y \|_2, \forall x, y \in \R^d
    \end{align*}
\end{assumption}

\begin{assumption}\label{ass:upper_bound_grad_f}
    We assume that given a weight update $g_t$ as Definition~\ref{def:g}, $g_t$ is $\sigma$-bounded that
    \begin{align*}
        \| g_t \|_2 \leq \sigma
    \end{align*}
\end{assumption}

\begin{corollary}\label{cor:lipschitz}
    Since $f(x)$ satisfies Assumption~\ref{ass:lipschitz}, we have
    \begin{align*}
        f(x) - ( f(y) + \langle \nabla f(y) (x - y) \rangle \leq \frac{1}{2} L \| x - y \|_2^2
    \end{align*}
\end{corollary}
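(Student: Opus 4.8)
The plan is to derive this quadratic upper bound (the classical \emph{descent lemma}) directly from the $L$-Lipschitz continuity of $\nabla f$ asserted in Assumption~\ref{ass:lipschitz}, reading the statement in its intended form $f(x) - \big( f(y) + \langle \nabla f(y), x - y \rangle \big) \leq \frac{1}{2} L \| x - y \|_2^2$. The standard route is to express the gap $f(x) - f(y)$ as a one-dimensional integral of the gradient along the segment joining $y$ to $x$, and then to control the deviation of that gradient from $\nabla f(y)$ using the Lipschitz bound.

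First I would set $\phi(\tau) := f\big( y + \tau (x - y) \big)$ for $\tau \in [0,1]$, so that $\phi(0) = f(y)$ and $\phi(1) = f(x)$; since $\nabla f$ exists everywhere (it is Lipschitz, hence $f$ is differentiable), the chain rule gives $\phi'(\tau) = \langle \nabla f(y + \tau(x - y)), x - y \rangle$, and the fundamental theorem of calculus yields
\begin{align*}
    f(x) - f(y) = \int_0^1 \langle \nabla f(y + \tau(x - y)), x - y \rangle \, \d\tau.
\end{align*}
Subtracting $\langle \nabla f(y), x - y \rangle = \int_0^1 \langle \nabla f(y), x - y \rangle \, \d\tau$ from both sides gives
\begin{align*}
    f(x) - f(y) - \langle \nabla f(y), x - y \rangle = \int_0^1 \langle \nabla f(y + \tau(x - y)) - \nabla f(y), x - y \rangle \, \d\tau.
\end{align*}
I would then bound the integrand by Cauchy--Schwarz and apply Assumption~\ref{ass:lipschitz} to the pair $y + \tau(x - y)$ and $y$, whose separation is $\tau \| x - y \|_2$, so that the integrand is at most $L \tau \| x - y \|_2^2$. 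Integrating $\int_0^1 L \tau \, \d\tau = L/2$ then produces the claimed bound.

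There is no genuine obstacle here: this is the textbook consequence of gradient-Lipschitzness, and the only care needed is the routine application of Cauchy--Schwarz inside the integral together with the Lipschitz estimate on the shifted argument $y + \tau(x - y)$. An equivalent and equally short derivation runs through Taylor's theorem with integral remainder; either phrasing suffices, and the constant $\tfrac{1}{2} L$ emerges from $\int_0^1 \tau \, \d\tau$.
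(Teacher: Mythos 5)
Your proof is correct: the integral representation of $f(x)-f(y)$ along the segment, followed by Cauchy--Schwarz and the Lipschitz bound on $\nabla f$ at the shifted point $y+\tau(x-y)$, with $\int_0^1 L\tau\,\d\tau = L/2$, is exactly the classical descent-lemma argument. The paper itself states Corollary~\ref{cor:lipschitz} with no proof at all (treating it as a known consequence of Assumption~\ref{ass:lipschitz}), so your derivation supplies precisely the standard argument the paper implicitly invokes; you were also right to read the typographically garbled left-hand side, $\langle \nabla f(y) (x-y) \rangle$ with its unbalanced parenthesis, in its intended form $f(x) - f(y) - \langle \nabla f(y),\, x-y \rangle$.
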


\begin{assumption}\label{ass:lower_bound_langle_nabla_f_g_rangle}
    Let $g_t$ be defined as Definition~\ref{def:g}, we assume that $g_t$ satisfies 
    \begin{align*}
        \langle \nabla f(x_t), g_t \rangle \geq \mu \| g_t \|_2^2
    \end{align*}
    where $\mu > 0$ is a scalar.
\end{assumption}

\section{Delta-Bar-Delta Algorithm}\label{sec:convergence_dbd}

Appendix~\ref{sub:dbd_def} provides a formal definition of the delta-bar-delta algorithm. And we provide a basic to derive the delta-bar-delta algorithm in Appendix~\ref{sub:basic_derivation}. We show our result and proof for convergence of the delta-bar-delta algorithm in Appendix~\ref{sub:dbd_convergence_proof}.

\subsection{Definition}\label{sub:dbd_def}

\begin{definition}
    Given a loss function $f: \R^d \rightarrow \R$, given a weight of parameter $x_0$. Given learning rates $\alpha_0$ and $\eta$, we denote $\nabla f(x_0) = \mathbf{0}_d$, then at $t$ step optimization, we run the delta-bar-delta algorithm as follows:
    \begin{align*}
        & ~ \alpha_t = \alpha_{t-1} + \eta \langle \nabla f(x_t), \nabla f(x_0) \rangle
        & ~ x_t = x_{t-1} - \alpha_t \nabla f(x_t) 
    \end{align*}
\end{definition}

\subsection{Basic Derivation}\label{sub:basic_derivation}

\begin{lemma}
    Let $f(x)$ be defined as Definition~\ref{def:f}, at $t-1$ step, we have $x_t = x_{t-1} - \alpha_{t-1} \nabla f(x_{t-1})$, then at $t$ step, we have
    \begin{align*}
        \nabla_{\alpha_{t-1}} f(x_t) = - \langle \nabla f(x_t), \nabla f(x_{t-1}) \rangle
    \end{align*}
\end{lemma}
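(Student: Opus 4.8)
The plan is to treat the statement as a direct application of the chain rule, viewing $f(x_t)$ as the composition of the scalar-to-vector map $\alpha_{t-1} \mapsto x_t$ followed by the loss $f$. The crucial preliminary observation is that among the objects appearing in the update $x_t = x_{t-1} - \alpha_{t-1}\nabla f(x_{t-1})$, both $x_{t-1}$ and $\nabla f(x_{t-1})$ are determined by the earlier step and are therefore constant with respect to $\alpha_{t-1}$; only the explicit scalar factor $\alpha_{t-1}$ carries the dependence we differentiate against.

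First I would compute the derivative of $x_t$ with respect to the scalar $\alpha_{t-1}$. Since $x_{t-1}$ and $\nabla f(x_{t-1})$ are held fixed, differentiating coordinatewise gives
\begin{align*}
\frac{\partial x_t}{\partial \alpha_{t-1}} = -\nabla f(x_{t-1}) \in \R^d.
\end{align*}
Next I would apply the multivariate chain rule to $f(x_t)$, obtaining
\begin{align*}
\nabla_{\alpha_{t-1}} f(x_t) = \langle \nabla f(x_t), \frac{\partial x_t}{\partial \alpha_{t-1}} \rangle = \langle \nabla f(x_t), -\nabla f(x_{t-1}) \rangle.
\end{align*}
Pulling out the scalar $-1$ and using the symmetry of the inner product then yields $\nabla_{\alpha_{t-1}} f(x_t) = -\langle \nabla f(x_t), \nabla f(x_{t-1})\rangle$, which is the claim.

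There is essentially no genuine obstacle here; the only point requiring care is the bookkeeping of what depends on $\alpha_{t-1}$. One must be careful not to inadvertently differentiate $\nabla f(x_{t-1})$ as though $x_{t-1}$ were itself a function of $\alpha_{t-1}$, since that would introduce a spurious Hessian term; the update convention makes clear that $x_{t-1}$ was produced using $\alpha_{t-2}$ and is frozen at this step. This same identity is precisely what motivates the delta-bar-delta update $\alpha_t = \alpha_{t-1} - \eta \nabla_{\alpha_{t-1}} f(x_t) = \alpha_{t-1} + \eta \langle \nabla f(x_t), \nabla f(x_{t-1})\rangle$ recorded in the introduction, so the computation also serves as a consistency check against that rule.
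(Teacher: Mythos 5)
Your proof is correct and is essentially identical to the paper's: both apply the chain rule via $\nabla_{\alpha_{t-1}} f(x_t) = \langle \nabla f(x_t), \nabla_{\alpha_{t-1}} x_t \rangle$, substitute the update $x_t = x_{t-1} - \alpha_{t-1}\nabla f(x_{t-1})$, and differentiate treating $x_{t-1}$ and $\nabla f(x_{t-1})$ as constants to obtain $-\langle \nabla f(x_t), \nabla f(x_{t-1})\rangle$. Your explicit remark that $x_{t-1}$ is frozen at this step (avoiding a spurious Hessian term) makes precise what the paper leaves implicit, but the argument is the same.
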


\begin{proof}
    We have
    \begin{align*}
        \nabla_{\alpha_{t-1}} f(x_t)
        = & ~ \langle \nabla f(x_t), \nabla_{\alpha_{t-1}} x_t \rangle \\
        = & ~ \langle \nabla f(x_t), \nabla_{\alpha_{t-1}} (x_{t-1} - \alpha_{t-1} \nabla f(x_{t-1})) \rangle \\
        = & ~ \langle \nabla f(x_t), - \nabla f(x_{t-1}) \rangle \\
        = & ~ - \langle \nabla f(x_t), \nabla f(x_{t-1}) \rangle \\
    \end{align*}
    where the first equality follows from simple differential rule, the second equality follows from $x_t = x_{t-1} - \alpha_{t-1} \nabla f(x_{t-1})$, the third equality follows from simple differential rule, the last equality follows from simple algebra.
\end{proof}

\subsection{Convergence Guarantee}\label{sub:dbd_convergence_proof}

\begin{theorem}[Convergence guarantee for delta-bar-delta algorithm in mini-batch optimization, formal version of Theorem~\ref{thm:convergence_dbd:informal}]\label{thm:convergence_dbd:formal}
    Suppose $\nabla f(x)$ is Lipschitz-smooth with constant $L$ as Assumption~\ref{ass:lipschitz} and gradient of loss function $\nabla f(x)$ is $\sigma$-bounded that $\| \nabla f(x ) \|_2 \leq \sigma^2$. we initialize $\alpha_0 = \frac{1}{L}$, $\eta = \frac{\gamma}{T\sigma^2L}$, where $\gamma \in [0, 1)$ is denoted as a scalar. We run the delta-bar-delta algorithm at most $\frac{2 L( f(x_0) - f^* )}{(1 - \gamma^2) \epsilon^2}$ times iterations, we have
    \begin{align*}
        \min_t \{ \| \nabla f(x_t) \|_2 \} \leq \epsilon
    \end{align*}
\end{theorem}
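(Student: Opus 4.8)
The plan is to treat the delta-bar-delta scheme as gradient descent with a \emph{slowly varying} step size and run the standard smoothness-based descent argument, with the crucial extra ingredient of pinning the adaptive rate $\alpha_t$ inside an interval on which every step is guaranteed to decrease the loss. The calibration $\alpha_0 = 1/L$, $\eta = \gamma/(T\sigma^2 L)$ is engineered precisely so that this interval is $[\tfrac{1-\gamma}{L}, \tfrac{1+\gamma}{L}]$.

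First I would control the drift of the learning rate. Since $\| \nabla f(x) \|_2 \le \sigma$ for all $x$, Cauchy--Schwarz gives $|\langle \nabla f(x_t), \nabla f(x_{t-1}) \rangle| \le \sigma^2$, so each delta-bar-delta update moves $\alpha$ by at most $\eta \sigma^2 = \gamma/(TL)$. Accumulating over at most $T$ steps, the total deviation from $\alpha_0 = 1/L$ is at most $\gamma/L$, hence
\begin{align*}
    \frac{1-\gamma}{L} \le \alpha_t \le \frac{1+\gamma}{L}, \qquad \forall\, t \le T.
\end{align*}

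Next I would invoke the descent inequality of Corollary~\ref{cor:lipschitz} at $x_{t+1} = x_t - \alpha_t \nabla f(x_t)$, which yields
\begin{align*}
    f(x_{t+1}) \le f(x_t) - \alpha_t \Big( 1 - \frac{L \alpha_t}{2} \Big) \| \nabla f(x_t) \|_2^2.
\end{align*}
Writing $a := L \alpha_t \in [1-\gamma, 1+\gamma]$, the coefficient equals $\tfrac{1}{L}(a - \tfrac{a^2}{2})$. This function is concave with maximum at $a = 1$ and takes the identical value $\tfrac{1-\gamma^2}{2}$ at both endpoints $a = 1 \pm \gamma$, so on the whole interval it is bounded below by $\tfrac{1-\gamma^2}{2}$. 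Therefore $\alpha_t(1 - L\alpha_t/2) \ge \tfrac{1-\gamma^2}{2L}$ uniformly in $t$, and the per-step descent sharpens to $f(x_{t+1}) \le f(x_t) - \tfrac{1-\gamma^2}{2L} \| \nabla f(x_t) \|_2^2$.

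Finally I would telescope this over $t = 0, \dots, T-1$ and use $f(x_T) \ge f^*$ to get $\sum_{t=0}^{T-1} \| \nabla f(x_t) \|_2^2 \le \tfrac{2L(f(x_0) - f^*)}{1-\gamma^2}$. Lower-bounding the left side by $T \cdot \min_t \| \nabla f(x_t) \|_2^2$ and substituting $T = \tfrac{2L(f(x_0) - f^*)}{(1-\gamma^2)\epsilon^2}$ gives $\min_t \| \nabla f(x_t) \|_2^2 \le \epsilon^2$, which is the claim. The main obstacle is the first step: the whole argument collapses unless the adaptive step size provably stays inside $[\tfrac{1-\gamma}{L}, \tfrac{1+\gamma}{L}]$, and it is the deliberate scaling of $\eta$ against the horizon $T$ that forces the descent coefficient to remain at least $\tfrac{1-\gamma^2}{2L}$ throughout. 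The fact that both endpoints of the interval yield the \emph{same} coefficient value is exactly what makes the final iteration count come out clean.
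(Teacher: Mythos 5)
Your proposal is correct and follows essentially the same route as the paper: the drift bound $\alpha_t \in [\tfrac{1-\gamma}{L}, \tfrac{1+\gamma}{L}]$ is the paper's Lemma~\ref{lem:bound_alpha}, your lower bound $\alpha_t(1 - L\alpha_t/2) \geq \tfrac{1-\gamma^2}{2L}$ is the reciprocal form of the paper's Lemma~\ref{lem:upper_bound_frac_2_2alpha_sub_alpha^2_L}, and the telescoping/averaging conclusion is identical. If anything, your explicit observation that the concave coefficient takes the same value $\tfrac{1-\gamma^2}{2L}$ at \emph{both} endpoints is slightly more careful than the paper's lemma, which substitutes only the lower endpoint $\tfrac{1-\gamma}{L}$ without remarking that $\alpha_t$ may also exceed $1/L$.
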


\begin{proof}
    We have
    \begin{align}\label{eq:ineq_dbd}
        f(x_{t+1}) 
        \leq & ~ f(x_t) + \langle \nabla f(x_t), x_{t+1} - x_t \rangle + \frac{1}{2} L \| x_{t+1} - x_t \|_2^2 \notag \\
        = & ~ f(x_t) - \alpha_t \| \nabla f(x_t) \|_2^2 + \frac{1}{2} L \| \alpha_t \nabla f(x_t) \|_2^2 \notag \\
        = & ~ f(x_t) +  \frac{\alpha_t^2 L - 2\alpha_t}{2}  \| \nabla f(x_t) \|_2^2 
    \end{align}
    where the first equality follows from Corollary~\ref{cor:lipschitz}, the second equality follows from $x_{t+1} = x_t - \alpha_t \nabla f(x_t)$, the third equality follows from simple algebra.

    We obtain
    \begin{align}\label{eq:upper_bound_grad_f^2_norm}
        \sum_{t=1}^T \| \nabla f(x_t) \|_2^2 
        \leq & ~ \sum_{t=1}^T \frac{2}{2 \alpha_t - \alpha_t^2 L} ( f(x_t) - f(x_{t+1}) ) \notag \\
        \leq & ~ \frac{2L}{1 - \gamma^2} \sum_{t=1}^T ( f(x_t) - f(x_{t-1}) ) \notag \\
        = & ~ \frac{2L}{1 - \gamma^2} ( f(x_0) - f^* )
    \end{align}
    where the first inequality follows from Eq.~\eqref{eq:ineq_dbd}, the second inequality follows from Lemma~\ref{lem:upper_bound_frac_2_2alpha_sub_alpha^2_L}, the last equality follows from $\sum_{t=1}^T ( f(x_t) - f(x_{t-1}) ) = f(x_0) - f^*$.

    Hence, we can get
    \begin{align*}
        \min_t \{ \| \nabla f(x_t) \|_2 \} 
        = & ~ (\min_t \{ \| \nabla f(x_t) \|_2^2 \} )^{1/2}\\
        \leq & ~ ( \frac{1}{T} \sum_{t=1}^T \| \nabla f(x_t) \|_2^2 )^{1/2}\\
        \leq & ~ ( \frac{1}{T} \cdot \frac{2L}{1 - \gamma^2} ( f(x_0) - f^* )  )^{1/2}
    \end{align*}
    where the first and the second inequalities follow from simple algebras, the last inequality follows from Eq.\eqref{eq:upper_bound_grad_f^2_norm}.
\end{proof}

\subsection{Upper Bound on \texorpdfstring{$\frac{2}{2 \alpha_t - \alpha_t^2 L}$}{}}

\begin{lemma}\label{lem:upper_bound_frac_2_2alpha_sub_alpha^2_L}
    Let $\alpha_0 = \frac{1}{L}$ and $\eta = \frac{\gamma}{T\sigma^2L}$, where $\gamma \in [0, 1)$ is denoted as a scalar, then we have
    \begin{align*}
        \frac{2}{2 \alpha_t - \alpha_t^2 L} \leq \frac{2L}{1 - \gamma^2}
    \end{align*}
\end{lemma}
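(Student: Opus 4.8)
The plan is to show that under the stated initialization, the adaptive step size $\alpha_t$ never strays too far from $\alpha_0 = 1/L$, and then to exploit the concavity of the map $\alpha \mapsto 2\alpha - \alpha^2 L$. Concretely, it suffices to prove the equivalent lower bound $2\alpha_t - \alpha_t^2 L \ge \frac{1-\gamma^2}{L}$, since the claimed inequality is simply its reciprocal (both sides being positive once $\alpha_t$ is known to lie in the right range).

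First I would control the per-step increment of $\alpha_t$. From the update $\alpha_t = \alpha_{t-1} + \eta \langle \nabla f(x_t), \nabla f(x_{t-1}) \rangle$, Cauchy--Schwarz together with the $\sigma$-bound $\| \nabla f(x) \|_2 \le \sigma$ gives $| \langle \nabla f(x_t), \nabla f(x_{t-1}) \rangle | \le \sigma^2$, so that $| \alpha_t - \alpha_{t-1} | \le \eta \sigma^2 = \frac{\gamma}{TL}$ after substituting $\eta = \frac{\gamma}{T \sigma^2 L}$. Summing these increments over the at most $T$ iterations and using $t \le T$, the total drift is bounded by $| \alpha_t - \alpha_0 | \le t \cdot \frac{\gamma}{TL} \le \frac{\gamma}{L}$, which confines $\alpha_t$ to the interval $[ \frac{1-\gamma}{L}, \frac{1+\gamma}{L} ]$.

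Next I would analyze $\phi(\alpha) := 2\alpha - \alpha^2 L$ on this interval. Since $\phi$ is a downward-opening parabola peaking at $\alpha = 1/L$, its minimum over the interval $[ \frac{1-\gamma}{L}, \frac{1+\gamma}{L} ]$, which is symmetric about $1/L$, is attained at the two endpoints; a direct computation shows $\phi( \frac{1 \pm \gamma}{L} ) = \frac{(1 \pm \gamma)(1 \mp \gamma)}{L} = \frac{1-\gamma^2}{L}$ at both. Hence $\phi(\alpha_t) \ge \frac{1-\gamma^2}{L}$ for every $t$, and because $\gamma < 1$ this lower bound is strictly positive, so reciprocating yields $\frac{2}{2 \alpha_t - \alpha_t^2 L} \le \frac{2L}{1-\gamma^2}$, as claimed.

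The main obstacle is the drift bound in the second paragraph: the whole argument hinges on the schedule $\eta = \frac{\gamma}{T \sigma^2 L}$ being exactly small enough that the accumulated deviation over all $T$ steps cannot push $\alpha_t$ outside $[ \frac{1-\gamma}{L}, \frac{1+\gamma}{L} ]$, the region where the quadratic stays above $\frac{1-\gamma^2}{L}$. Everything else reduces to the elementary optimization of a one-variable concave quadratic, so the only place care is needed is verifying that the worst-case sign accumulation of the inner products $\langle \nabla f(x_t), \nabla f(x_{t-1}) \rangle$ is still absorbed by the factor $T$ appearing in $\eta$.
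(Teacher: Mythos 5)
Your proposal is correct and follows essentially the same route as the paper: the drift bound $|\alpha_t - \alpha_0| \le T\eta\sigma^2 = \gamma/L$ is exactly the paper's Lemma~\ref{lem:bound_alpha}, and the lower bound $2\alpha_t - \alpha_t^2 L \ge \frac{1-\gamma^2}{L}$ matches the paper's substitution of the endpoint $\frac{1}{L} - T\eta\sigma^2$ into the quadratic. If anything, your explicit observation that the concave parabola $\phi(\alpha) = 2\alpha - \alpha^2 L$ attains the \emph{same} value $\frac{1-\gamma^2}{L}$ at both endpoints of the symmetric interval $[\frac{1-\gamma}{L}, \frac{1+\gamma}{L}]$ makes rigorous a step the paper leaves implicit, since the paper plugs in only the lower endpoint without justifying why that is the worst case.
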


\begin{proof}
    We have
    \begin{align*}
        \frac{2}{2 \alpha_t - \alpha_t^2 L}
        \leq & ~ \frac{2}{2 (\frac{1}{L} - T \eta \sigma^2) - (\frac{1}{L} - T \eta \sigma^2)^2 L} \\
        = & ~ \frac{2}{\frac{2}{L} - 2T \eta \sigma^2 - \frac{1}{L} + 2 T \eta \sigma^2 - T^2 \eta^2 \sigma^4 L} \\
        = & ~ \frac{2L}{1 - T^2 \eta^2 \sigma^4 L^2}  \\
        = & ~ \frac{2L}{1 - \gamma^2}
    \end{align*}
    where the first inequality follows from Lemma~\ref{lem:bound_alpha} and simple algebra, the second and the third equalities follow from simple algebras, the last step follows from $\eta = \frac{\gamma}{T\sigma^2L}$.
\end{proof}

\subsection{Lower Bound and Upper Bound on \texorpdfstring{$\alpha_t$}{}}

\begin{lemma}\label{lem:bound_alpha}
    We initialize learning rate as $\alpha_0$, then we run DBD algorithm as $\alpha_t = \alpha_{t-1} + \eta \langle \nabla f(x_t), \nabla f(x_{t-1}) \rangle = \alpha_0 + \sum_{t=1}^T \eta \langle \nabla f(x_t), \nabla f(x_{t-1}) \rangle$, where $\eta$ is the learning rate to optimize $\alpha$ and we let $\nabla f(x_0) = \mathbf{0}_d$. We have
    \begin{align*}
        \alpha_0 - T \eta \sigma^2 \leq \alpha_t \leq \alpha_0 + T \eta \sigma^2
    \end{align*}
\end{lemma}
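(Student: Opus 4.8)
The plan is to unroll the recursion and bound each increment by a constant via Cauchy--Schwarz. First I would write out the closed form already displayed in the statement, namely
\begin{align*}
    \alpha_t = \alpha_0 + \sum_{s=1}^{t} \eta \langle \nabla f(x_s), \nabla f(x_{s-1}) \rangle,
\end{align*}
which follows by telescoping the one-step update $\alpha_s = \alpha_{s-1} + \eta \langle \nabla f(x_s), \nabla f(x_{s-1}) \rangle$ from $s = 1$ to $s = t$ (the convention $\nabla f(x_0) = \mathbf{0}_d$ simply makes the $s=1$ term vanish, which only helps the bound).

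Next I would control a single summand. By Cauchy--Schwarz, $|\langle \nabla f(x_s), \nabla f(x_{s-1}) \rangle| \leq \| \nabla f(x_s) \|_2 \cdot \| \nabla f(x_{s-1}) \|_2$, and since the gradient is $\sigma$-bounded each factor is at most $\sigma$, so $|\langle \nabla f(x_s), \nabla f(x_{s-1}) \rangle| \leq \sigma^2$. Consequently every increment obeys the two-sided bound $-\eta \sigma^2 \leq \eta \langle \nabla f(x_s), \nabla f(x_{s-1}) \rangle \leq \eta \sigma^2$.

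Finally I would sum these bounds over $s = 1, \dots, t$ and use $t \leq T$ to replace the number of terms by $T$, obtaining $-T \eta \sigma^2 \leq \sum_{s=1}^{t} \eta \langle \nabla f(x_s), \nabla f(x_{s-1}) \rangle \leq T \eta \sigma^2$; adding $\alpha_0$ gives exactly the claimed inequality. There is no genuine obstacle here: the only points requiring any care are the correct pairing of Cauchy--Schwarz with the norm bound to get the per-step constant $\sigma^2$, and the uniform replacement of $t$ by $T$ so that a single bound holds for all $t \in [T]$ (the estimate is loosest at $t = T$ but valid throughout).
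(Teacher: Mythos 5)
Your proposal is correct and follows essentially the same route as the paper's proof: unroll the recursion, bound each increment by $\eta\sigma^2$, and sum over at most $T$ terms. If anything, you are more careful than the paper, which states the per-step bound tersely (with a typo in the summation index and in the norm bound) where you make the Cauchy--Schwarz step and the replacement of $t$ by $T$ explicit.
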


\begin{proof}
    We have
    \begin{align*}
        \alpha_t
        = & ~ \alpha_0 + \sum_{t=0}^T \eta \langle \nabla f(x_t), \nabla f(x_{t-1}) \rangle \\
        \geq & ~ \alpha_0 - T\eta \sigma^2
    \end{align*}
    where the first equality follows from the definition of $\alpha_t$, the second equality follows from simple algebra and $\| \nabla f(x ) \|_2 \leq \sigma^2$.

    Also, we have
    \begin{align*}
        \alpha_t
        = & ~ \alpha_0 + \sum_{t=0}^T \eta \langle \nabla f(x_t), \nabla f(x_{t-1}) \rangle \\
        \leq & ~ \alpha_0 + T \eta \sigma^2
    \end{align*}
    where the first equality follows from the definition of $\alpha_t$, the second equality follows from simple algebra and Assumption~\ref{ass:upper_bound_grad_f}.
\end{proof}

\section{Regrettable Delta-Bar-Delta (RDBD) Algorithm}\label{app:main_proof}

In Appendix~\ref{sub:main_result_proof}, we provide our main result of this paper. In Appendix~\ref{sub:steeper_descent_proof}, we provide our result that confirms our RDBD algorithm has steeper loss descent. In Appendix~\ref{sub:convergence_proof}, we show our result that confirms the convergence of our RDBD algorithm.

\subsection{Main Result}\label{sub:main_result_proof}

\begin{theorem}[Main result, formal version of Theorem~\ref{thm:main_result:informal}]\label{thm:main_result:formal}
    For a loss function $f: \R^d \rightarrow \R$ and the weight $x$ of a vector parameter of a model. We run RDBD algorithm on it. We first initialize $x_0 = x$, $\alpha_0 = \frac{\sqrt{f(x_0) - f^*}}{\sigma\sqrt{L T}}$, $\eta = \frac{\gamma\sqrt{f(x_0) - f^*}}{T\sigma^3\sqrt{L T}}$ where $\gamma \in (0, 1)$. Denote $f^* := \min_{x \in \R^d} f(x)$. Let $\epsilon > 0$ be denoted as the error of training. We have
    \begin{itemize}
        \item {\bf Steeper Loss Descent.} At step $t$, the RDBD algorithm accelerates loss reduction as follows:
        \begin{align*}
            f(x_{t+1}^{\rm RDBD}) \leq f(x_{t+1}) \leq f(x_t)
        \end{align*}
        \item {\bf Convergence. } Let
        \begin{align*}
            T = \frac{1}{\epsilon^2} \cdot \sigma \sqrt{L(f(x_0) - f^*)} (\frac{1}{1- \gamma} + \frac{1}{2} ( 1 + \gamma )) 
        \end{align*}
        then, at most $T$ time iterations RDBD algorithm, we have
        \begin{align*}
            \min_t \{ \| \nabla f(x_t) \|_2 \} \leq \epsilon
        \end{align*}
    \end{itemize}
\end{theorem}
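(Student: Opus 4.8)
The plan is to assemble the two bullet points from results already proved in the excerpt: the \textbf{Steeper Loss Descent} claim is exactly the conclusion of Lemma~\ref{lem:steeper_descent:informal}, and the \textbf{Convergence} claim is exactly Theorem~\ref{thm:convergence_rdbd:informal}. So the substantive work is twofold: (i) check that the specific initialization $\alpha_0 = \frac{\sqrt{f(x_0)-f^*}}{\sigma\sqrt{LT}}$ and $\eta = \frac{\gamma\sqrt{f(x_0)-f^*}}{T\sigma^3\sqrt{LT}}$ meets the hypotheses of both results, and (ii) reproduce the telescoping argument underlying the convergence bound.

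For the steeper-descent part I would first record the elementary identity $\eta = \frac{\gamma}{T\sigma^2}\,\alpha_0$, which shows $\eta$ is a factor $\frac{\gamma}{T\sigma^2}$ smaller than $\alpha_0$ and that both vanish as $T\to\infty$. I would then verify the two conditions Lemma~\ref{lem:steeper_descent:informal} requires, namely $\eta \le \frac{2}{L\sigma^2}$ and $\alpha_t \le \frac{2\mu}{L}$; both hold for $T$ large since $\alpha_0$ shrinks and (by the bound below) $\alpha_t \le (1+\gamma)\alpha_0$. With the hypotheses in place, Parts 1 and 2 of the lemma give $f(x_{t+1}^{\rm RDBD}) \le f(x_{t+1})$ in both the $\langle g_{t+1},g_t\rangle\langle g_t,g_{t-1}\rangle \ge 0$ and the $<0$ cases, while Part 3 gives $f(x_{t+1}) \le f(x_t)$; chaining them yields the stated double inequality.

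For the convergence part the key preliminary is to control the adaptive learning rate. Mirroring Lemma~\ref{lem:bound_alpha}, each increment $\eta\langle g_t,g_{t-1}\rangle$ has magnitude at most $\eta\sigma^2$, and each regret correction only subtracts a previously added increment, so the accumulated deviation obeys $|\alpha_t - \alpha_0| \le T\eta\sigma^2 = \gamma\alpha_0$, i.e. $(1-\gamma)\alpha_0 \le \alpha_t \le (1+\gamma)\alpha_0$. I would then apply Corollary~\ref{cor:lipschitz} to the update $x_{t+1} = x_t - \alpha_t g_t$, take expectations using $\E[g_t] = \nabla f(x_t)$ and $\|g_t\|_2 \le \sigma$ to get
\begin{align*}
\E[f(x_{t+1})] \le f(x_t) - \alpha_t \|\nabla f(x_t)\|_2^2 + \frac{L\sigma^2}{2}\alpha_t^2,
\end{align*}
rearrange, and sum over $t$ so the loss terms telescope to $f(x_0) - f^*$. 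Using $\alpha_t^2 \le (1+\gamma)\alpha_0\,\alpha_t$ and $\sum_t \alpha_t \ge T(1-\gamma)\alpha_0$ converts the weighted sum into
\begin{align*}
\min_t \|\nabla f(x_t)\|_2^2 \le \frac{f(x_0)-f^*}{T(1-\gamma)\alpha_0} + \frac{L\sigma^2(1+\gamma)\alpha_0}{2},
\end{align*}
into which I would substitute $\alpha_0 = \frac{\sqrt{f(x_0)-f^*}}{\sigma\sqrt{LT}}$ and solve $\RHS \le \epsilon^2$ for $T$ to recover the stated iteration count.

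The main obstacle I anticipate is controlling the retroactively corrected iterate sequence: the RDBD regret step at time $t+1$ rewrites both $\alpha_t$ and $x_t$ after the fact, so one must confirm that the telescoping sum $\sum_t (f(x_t) - f(x_{t+1}))$ stays valid along the actually-realized (post-correction) trajectory and that the two-sided bound on $\alpha_t$ survives the interleaving of increments and corrections. A secondary care point is that, unlike the full-batch Delta-Bar-Delta analysis, the stochastic term carries the uniform bound $\sigma^2$ rather than $\|\nabla f(x_t)\|_2^2$, so the coefficient bookkeeping in the final substitution — and in particular the precise dependence of $T$ on $\epsilon$ — must be tracked with care.
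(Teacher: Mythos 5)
Your proposal takes essentially the same route as the paper: the paper's proof of this theorem is a one-line citation of Lemma~\ref{lem:steeper_descent:formal} and Theorem~\ref{thm:convergence_rdbd:formal}, and your argument --- verifying $\eta \le \frac{2}{L\sigma^2}$ and the step-size condition for the steeper-descent lemma, then running the smoothness/expectation/telescoping bound with $(1-\gamma)\alpha_0 \le \alpha_t \le (1+\gamma)\alpha_0$ obtained from $T\eta\sigma^2 = \gamma\alpha_0$ --- reproduces the paper's proofs of those two results, differing only cosmetically (you use the uniform bound $\sum_t \alpha_t \le (1+\gamma)\alpha_0 T$ where the paper invokes an integral estimate, both landing on the same quantity $\sigma\sqrt{LT(f(x_0)-f^*)}\bigl(\frac{1}{1-\gamma}+\frac{1+\gamma}{2}\bigr)$). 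The one caveat is that your closing step, solving $\mathrm{RHS}\le\epsilon^2$ for $T$, actually yields $T \propto \epsilon^{-4}$ rather than the stated $\epsilon^{-2}$ iteration count, but this looseness is inherited from the paper itself, whose proof of Theorem~\ref{thm:convergence_rdbd:formal} ends with the identical bound and the same claimed $T$.
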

\begin{proof}
    This proof is following from Lemma~\ref{lem:steeper_descent:formal} and Theorem~\ref{thm:convergence_rdbd:formal}.
\end{proof}

\subsection{Steeper Descent Guarantee}\label{sub:steeper_descent_proof}

\begin{lemma}[Steeper descent guarantee of RDBD, formal version of Lemma~\ref{lem:steeper_descent:informal}]\label{lem:steeper_descent:formal}
    Suppose given learning rates $\alpha$ and $\eta$, given loss function $f(x)$ with parameters $x$ as Definition~\ref{def:f}. 
    
    At $t$ step optimization, given weight update $g_t$ of $t$ step, previous step weight update $g_{t-1}$ of $t-1$ step, next step weight update $g_{t+1}$ of $t+1$ step. 
    
    Let $x_{t+1} = x_t - \alpha g_t$, let $x_{t+1}^{\rm RDBD}$ be defined as Definition~\ref{def:rdbd}. 
    
    Denote $\tau > 0$ that $\min_t\{\langle g_{t+1}, g_t \rangle\}\geq \tau$. Let $L$, $\sigma$, $\mu$ be defined as Assumption~\ref{ass:lipschitz}, Assumption~\ref{ass:upper_bound_grad_f}, Assumption~\ref{ass:lower_bound_langle_nabla_f_g_rangle}.
    
    We can show that
    \begin{itemize}
        \item {\bf Part 1.} When $\langle g_{t+1}, g_t \rangle \langle g_{t}, g_{t-1} \rangle \geq 0$, and if $\eta \leq \frac{2 }{L \sigma^2 }$, we have
        \begin{align*}
            f(x_{t+1}^{\rm RDBD}) \leq f(x_{t+1})
        \end{align*}
        \item {\bf Part 2.} When $\langle g_{t+1}, g_t \rangle \langle g_{t}, g_{t-1} \rangle < 0$, we have
        \begin{align*}
            f(x_{t+1}^{\rm RDBD}) = f(x_{t+1})
        \end{align*}
        \item {\bf Part 3.} When $\alpha \leq \frac{2 \mu }{L}$, we have
        \begin{align*}
            f(x_{t+1}) \leq f(x_t)
        \end{align*}
    \end{itemize}
\end{lemma}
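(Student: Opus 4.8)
The plan is to treat the three parts separately, since Parts 2 and 3 are essentially immediate and only Part 1 carries the real content.

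For \textbf{Part 2}, the definition of $x_{t+1}^{\rm RDBD}$ (Definition~\ref{def:rdbd}) in the regime $\langle g_{t+1}, g_t \rangle \langle g_{t}, g_{t-1} \rangle < 0$ gives $x_{t+1}^{\rm RDBD} = x_t - \alpha g_t = x_{t+1}$, so $f(x_{t+1}^{\rm RDBD}) = f(x_{t+1})$ holds by inspection and no estimate is needed. For \textbf{Part 3}, I would apply the descent inequality of Corollary~\ref{cor:lipschitz} to the pair $(x_{t+1}, x_t)$ and substitute $x_{t+1} - x_t = -\alpha g_t$:
\begin{align*}
f(x_{t+1}) \le f(x_t) - \alpha \langle \nabla f(x_t), g_t \rangle + \frac{L}{2} \alpha^2 \| g_t \|_2^2.
\end{align*}
Invoking Assumption~\ref{ass:lower_bound_langle_nabla_f_g_rangle}, $\langle \nabla f(x_t), g_t \rangle \ge \mu \| g_t \|_2^2$, the right-hand side is at most $f(x_t) + \alpha(\tfrac{L\alpha}{2} - \mu) \| g_t \|_2^2$. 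The coefficient is nonpositive exactly when $\alpha \le 2\mu/L$, which yields $f(x_{t+1}) \le f(x_t)$.

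\textbf{Part 1} is the crux. Here $x_{t+1}^{\rm RDBD} - x_{t+1} = -\eta \langle g_t, g_{t-1} \rangle g_t$, so applying Corollary~\ref{cor:lipschitz} to the pair $(x_{t+1}^{\rm RDBD}, x_{t+1})$ gives
\begin{align*}
f(x_{t+1}^{\rm RDBD}) \le f(x_{t+1}) - \eta \langle g_t, g_{t-1} \rangle \langle \nabla f(x_{t+1}), g_t \rangle + \frac{L}{2} \eta^2 \langle g_t, g_{t-1} \rangle^2 \| g_t \|_2^2.
\end{align*}
The remaining task is to show the sum of the last two terms is nonpositive. The linear term carries the sign information: using unbiasedness (Definition~\ref{def:g}), $\nabla f(x_{t+1}) = \E[g_{t+1}]$, so $\langle g_t, g_{t-1} \rangle \langle \nabla f(x_{t+1}), g_t \rangle = \E[\langle g_t, g_{t-1} \rangle \langle g_{t+1}, g_t \rangle]$, and in the Part~1 regime the integrand is nonnegative by hypothesis, hence the linear term is $\le 0$. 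It then remains to argue that the nonnegative quadratic remainder is dominated by this nonpositive linear term; this is where the step-size cap enters, using $\| g_t \|_2^2 \le \sigma^2$ and $|\langle g_t, g_{t-1} \rangle| \le \sigma^2$ (Cauchy--Schwarz with Assumption~\ref{ass:upper_bound_grad_f}) together with the lower bound $\langle g_{t+1}, g_t \rangle \ge \tau$, and choosing $\eta \le 2/(L\sigma^2)$ so that the upward parabola $\eta \mapsto -\eta \langle g_t, g_{t-1} \rangle \langle \nabla f(x_{t+1}), g_t \rangle + \tfrac{L}{2} \eta^2 \langle g_t, g_{t-1} \rangle^2 \| g_t \|_2^2$ remains below zero on $(0, \eta]$.

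The main obstacle is precisely this domination step in Part~1: the quadratic remainder scales with $\langle g_t, g_{t-1} \rangle^2$ while the favorable linear term scales with $\langle g_t, g_{t-1} \rangle \langle \nabla f(x_{t+1}), g_t \rangle$, so controlling their ratio requires the uniform constants $\sigma$ and $\tau$ to convert the cap $\eta \le 2/(L\sigma^2)$ into the statement that $\eta$ lies below the positive root of the parabola. I would also be careful about the expectation-versus-realization reading of the sign hypothesis: the cleanest route is to carry the argument in conditional expectation given the history through step $t$, so that $\langle g_t, g_{t-1} \rangle$ is a fixed constant, which makes the identity $\langle g_t, g_{t-1} \rangle \langle \nabla f(x_{t+1}), g_t \rangle = \E[\langle g_t, g_{t-1} \rangle \langle g_{t+1}, g_t \rangle]$ exact and preserves the sign.
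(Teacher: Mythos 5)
Your Parts 2 and 3 are exactly the paper's argument: Part 2 is the definitional identity $x_{t+1}^{\rm RDBD} = x_t - \alpha g_t = x_{t+1}$, and Part 3 is the smoothness bound applied to $(x_{t+1}, x_t)$ together with Assumption~\ref{ass:lower_bound_langle_nabla_f_g_rangle}, giving the coefficient $\alpha(\tfrac{L\alpha}{2}-\mu) \le 0$ for $\alpha \le 2\mu/L$. Your Part 1 also follows the paper's route step for step: smoothness at the pair $(x_{t+1}^{\rm RDBD}, x_{t+1})$, the identification $\nabla f(x_{t+1}) = \E[g_{t+1}]$ to transfer the sign hypothesis onto the linear term, then an attempt to dominate the quadratic remainder via $\sigma$ and $\tau$. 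Your insistence on conditioning on the history through step $t$, so that $\langle g_t, g_{t-1}\rangle$ is deterministic and the identity is exact, is actually more careful than the paper, which never specifies the conditioning.

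The genuine gap is precisely the step you defer, and it cannot be closed as you describe: the cap $\eta \le \tfrac{2}{L\sigma^2}$ does not place $\eta$ below the positive root of your parabola. Nonpositivity of $-\eta \langle g_t, g_{t-1}\rangle \langle \nabla f(x_{t+1}), g_t\rangle + \tfrac{L}{2}\eta^2 \langle g_t, g_{t-1}\rangle^2 \|g_t\|_2^2$ requires $\eta \le \tfrac{2\langle \nabla f(x_{t+1}), g_t\rangle}{L \langle g_t, g_{t-1}\rangle \|g_t\|_2^2}$, and the only lower bound the hypotheses supply for this root is $\tfrac{2\tau}{L\sigma^4}$ (numerator $\ge \tau$ by unbiasedness, both denominator factors $\le \sigma^2$ by Cauchy--Schwarz). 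Since $\tau \le \langle g_{t+1}, g_t\rangle \le \sigma^2$ always, one has $\tfrac{2\tau}{L\sigma^4} \le \tfrac{2}{L\sigma^2}$ with equality only in the degenerate case $\tau = \sigma^2$, so the stated cap is generically insufficient. Concretely, under $\eta \le \tfrac{2}{L\sigma^2}$ the best certified bound on the bracketed factor is $\langle g_t, g_{t-1}\rangle - \tau$, which is \emph{nonnegative} because the hypothesis $\min_t \{\langle g_{t+1}, g_t\rangle\} \ge \tau$ also forces $\langle g_t, g_{t-1}\rangle \ge \tau$ --- the wrong sign. You should know that the paper's own proof stumbles at the identical spot: in Eq.~\eqref{eq:negetive_term} it replaces $\tfrac{2}{L\sigma^2}$ by $\tfrac{2\tau}{L\sigma^2 |g_t^\top g_{t-1}|}$ while citing $\tau < |g_t^\top g_{t-1}|$, but that substitution makes the quantity \emph{smaller}, so the claimed inequality runs the wrong way. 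Both your plan and the paper's proof become rigorous only under the stronger cap $\eta \le \tfrac{2\tau}{L\sigma^4}$ (or with an added hypothesis such as $|\langle g_t, g_{t-1}\rangle| \le \tau$); to complete Part 1 you should state that condition instead of $\eta \le \tfrac{2}{L\sigma^2}$.
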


\begin{proof}
    {\bf Proof of Part 1.} 
    For $t \in [T]$, we have
    \begin{align*}
        | \langle g_{t+1}, g_t \rangle |
        \geq & ~ \min_t\{\langle g_{t+1}, g_t \rangle\} \\
        \geq & ~ \tau \\
        \geq & ~ \frac{\tau}{\sigma^2} \| g_t \|_2^2
    \end{align*}
    where the first inequality follows from simple algebra, the second inequality follows from $\min_t\{\langle g_{t+1}, g_t \rangle\}\geq \tau$, the thrid inequality follows from Assumption~\ref{ass:upper_bound_grad_f}.

    Next, we have
    \begin{align}\label{eq:lower_bound_langle_grad_f_g_t_rangle}
        | \langle \nabla f(x_{t+1}), g_t \rangle |
        = & ~ | \langle \E[g_{t+1}], g_t \rangle | \notag \\
        \geq & ~ \frac{\tau}{\sigma^2} \| g_t \|_2^2
    \end{align}
    where the first equality follows from Definition~\ref{def:g}, the second inequality follows from simple algebra.

    Thus, we have
    \begin{align}\label{eq:descent_ineq}
        f(x_{t+1}^{\rm RDBD}) \leq & ~ f(x_{t+1}) + \langle \nabla f(x_{t+1}), x_{t+1}^{\rm RDBD} - x_{t+1} \rangle + \frac{1}{2} L \| x_{t+1}^{\rm RDBD} - x_{t+1} \|_2^2 \notag \\
        = & ~ f(x_{t+1}) - \langle \nabla f(x_{t+1}), \eta g_{t}^\top g_{t-1} g_t \rangle + \frac{1}{2} L \| \eta g_{t}^\top g_{t-1} g_t \|_2^2  \notag \\
        = & ~ f(x_{t+1}) - \eta g_{t}^\top g_{t-1} \langle \nabla f(x_{t+1}), g_t \rangle + \frac{1}{2} L \| \eta g_{t}^\top g_{t-1} g_t \|_2^2  \notag \\
        = & ~ f(x_{t+1}) - \eta g_{t}^\top g_{t-1} \langle \nabla f(x_{t+1}), g_t \rangle + \frac{1}{2} L ( \eta g_{t}^\top g_{t-1} )^2 \| g_t \|_2^2  \notag \\
        \leq & ~ f(x_{t+1}) - \eta | g_{t}^\top g_{t-1} | \cdot \frac{\tau}{\sigma^2} \| g_t \|_2^2 + \frac{1}{2} L ( \eta g_{t}^\top g_{t-1} )^2 \| g_t \|_2^2  \notag \\
        = & ~ f(x_{t+1}) + ( - \frac{\tau}{\sigma^2} + \frac{1}{2} \eta L | g_{t}^\top g_{t-1} | ) \cdot | \eta g_{t}^\top g_{t-1} | \| g_t \|_2^2
    \end{align}
    where the first inequality follows from Corollary~\ref{cor:lipschitz}, the second equality follows from Part 1 of Definition~\ref{def:rdbd}, the third and fourth equalities follow from simple algebra, the fifth inequality follows from Eq.~\eqref{eq:lower_bound_langle_grad_f_g_t_rangle}, the last equality follows from simple algebra.

    When $\eta \leq \frac{2 }{L \sigma^2 }$, we can show that 
    \begin{align}\label{eq:negetive_term}
        ( - \frac{\tau}{\sigma^2} + \frac{1}{2} \eta L | g_{t}^\top g_{t-1} | ) \cdot | \eta g_{t}^\top g_{t-1} | \| g_t \|_2^2 
        \leq & ~ ( - \frac{\tau}{\sigma^2} + \frac{1}{2} \frac{2}{L \sigma^2 } L | g_{t}^\top g_{t-1} | ) \cdot | \eta g_{t}^\top g_{t-1} | \| g_t \|_2^2 \notag \\
        \leq & ~ ( - \frac{\tau}{\sigma^2} + \frac{1}{2} \frac{2 \tau}{L \sigma^2 | g_{t}^\top g_{t-1} | } L | g_{t}^\top g_{t-1} | ) \cdot | \eta g_{t}^\top g_{t-1} | \| g_t \|_2^2 \notag \\
        \leq & ~ 0
    \end{align}
    where the first inequality follows from $\eta \leq \frac{2 }{L \sigma^2 }$, the second inequality follows from $\tau < | g_t^\top g_{t-1} |$, the last inequality follows from simple algebra and $| \eta g_{t}^\top g_{t-1} | \| g_t \|_2^2 \geq 0$.

    So we have
    \begin{align*}
        f(x_{t+1}^{\rm RDBD}) \leq & ~ f(x_{t+1}) + ( - \frac{\tau}{\sigma^2} + \frac{1}{2} \eta L | g_{t}^\top g_{t-1} | ) \cdot | \eta g_{t}^\top g_{t-1} | \| g_t \|_2^2 \\
        \leq & ~ f(x_{t+1})
    \end{align*}
    where the first inequality follows from Eq.~\eqref{eq:descent_ineq}, the second inequality follows from Eq.~\eqref{eq:negetive_term} and simple algebra.

    {\bf Proof of Part 2.} When $\langle g_{t+1}, g_t \rangle \langle g_{t}, g_{t-1} \rangle < 0$, we have
    \begin{align*}
        f(x_{t+1}^{\rm RDBD}) = f(x_{t+1})
    \end{align*}
    where this equality follows from $x_{t+1}^{\rm RDBD} = x_t - \alpha g_t = x_{t+1}$ (Part 2 of Definition~\ref{def:rdbd}).

    {\bf Proof of Part 3.} If $\alpha \leq \frac{2\mu}{L}$, then we have
    \begin{align*}
        f(x_{t+1}) 
        \leq & ~ f(x_t) + \langle \nabla f(x_t), x_{t+1} - x_t \rangle + \frac{1}{2} L \| x_{t+1} - x_t \|_2^2 \\
        = & ~ f(x_t) - \langle \nabla f(x_t), \alpha g_t \rangle + \frac{1}{2} L \| \alpha g_t \|_2^2 \\
        = & ~ f(x_t) - \alpha \langle \nabla f(x_t), g_t \rangle + \frac{1}{2} \alpha^2 L \| g_t \|_2^2 \\
        \leq & ~ f(x_t) - \alpha \mu \| g_t \|_2^2 + \frac{1}{2} \alpha^2 L \| g_t \|_2^2 \\
        = & ~ f(x_t) + ( - \alpha \mu + \frac{1}{2} \alpha^2 L) \| g_t \|_2^2 \\
        \leq & ~ f(x_t)
    \end{align*}
    where this inequality follows from Corollary~\ref{cor:lipschitz}, the second and the third equalities follow from simple algebra, the fourth inequality follows from Assumption~\ref{ass:lower_bound_langle_nabla_f_g_rangle}, the fifth equality follows from simple algebra, the last inequality follows from $\alpha \leq \frac{2 \mu }{L}$.

    Then, we combine those three parts, we can show that
    \begin{align*}
        f(x_{t+1}^{\rm RDBD}) \leq f(x_{t+1}) \leq f(x_t)
    \end{align*}
    thus, we complete the proof.
\end{proof}

\subsection{Convergence Guarantee}\label{sub:convergence_proof}

Here, we first define the dynamical learning rate at step $t$.
\begin{definition}\label{def:alpha_t}
    We run RDBD as Definition~\ref{def:rdbd}, at step $t$, we have
    \begin{itemize}
        \item {\bf Part 1.} if $\langle g_{t+1}, g_t \rangle \langle g_{t}, g_{t-1} \rangle \geq 0$, we have
        \begin{align*}
            \alpha_t = \alpha_{t-1} + \eta \langle g_{t}, g_{t-1} \rangle
        \end{align*}
        \item {\bf Part 2.} if $\langle g_{t+1}, g_t \rangle \langle g_{t}, g_{t-1} \rangle < 0$, we have
        \begin{align*}
            \alpha_t = \alpha_{t-1}
        \end{align*}
    \end{itemize}
\end{definition}

\begin{theorem}[Convergence guarantee for Regrettable Delta-Bar-Delta algorithm in mini-batch optimization, formal version of Theorem~\ref{thm:convergence_rdbd:informal}]\label{thm:convergence_rdbd:formal}
    Suppose $\nabla f(x)$ is Lipschitz-smooth with constant $L$ $\| \nabla f(x) - \nabla f(y) \|_2 \leq L \| x - y \|, \forall x, y \in \R^d$ and weight update in the $t$-th step $g_t$ is $\sigma$-bounded that $\| g_t \|_2 \leq \sigma, \forall t \in [T]$. Suppose $g_t$ is unbiased gradient estimator that $\E[g_t] = \nabla f(x_t)$ for $t \in [T]$. We initialize $\alpha_0 = \frac{\sqrt{f(x_0) - f^*}}{\sigma\sqrt{L T}}$, $\eta = \frac{\gamma\sqrt{f(x_0) - f^*}}{T\sigma^3\sqrt{L T}}$ where $\gamma \in (0, 1)$ is denoted as a scalar. We run the  Regrettable Delta-Bar-Delta algorithm at most
    \begin{align*}
        T = \frac{1}{\epsilon^2} \cdot \sigma \sqrt{L(f(x_0) - f^*)} (\frac{1}{1- \gamma} + \frac{1}{2} ( 1 + \gamma )) 
    \end{align*}
    times iterations, we have
    \begin{align*}
        \min_t \{ \| \nabla f(x_t) \|_2 \} \leq \epsilon
    \end{align*}
    
\end{theorem}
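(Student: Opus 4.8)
The plan is to transport the full-batch delta-bar-delta convergence proof (Theorem~\ref{thm:convergence_dbd:formal}) to the mini-batch regime, weighting the squared gradient norms by the adaptive step sizes $\alpha_t$ so as to recover the stated constants. First I would prove the RDBD analogue of Lemma~\ref{lem:bound_alpha}. By the recursion in Definition~\ref{def:alpha_t}, each step either leaves $\alpha_t$ unchanged or increments it by $\eta\langle g_t,g_{t-1}\rangle$, and Cauchy--Schwarz with Assumption~\ref{ass:upper_bound_grad_f} gives $|\langle g_t,g_{t-1}\rangle|\le\sigma^2$, so the total drift obeys $|\alpha_t-\alpha_0|\le T\eta\sigma^2$. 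Plugging in $\eta=\frac{\gamma\sqrt{f(x_0)-f^*}}{T\sigma^3\sqrt{LT}}$ makes $T\eta\sigma^2=\gamma\alpha_0$, which yields the two-sided window
\begin{align*}
(1-\gamma)\alpha_0\le\alpha_t\le(1+\gamma)\alpha_0
\end{align*}
for all $t$; this keeps every step a bona fide descent step and traps $\alpha_t$ within a constant factor of $\alpha_0$.

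Next I would set up the per-step descent. Corollary~\ref{cor:lipschitz} applied to $x_{t+1}=x_t-\alpha_t g_t$, together with $\|g_t\|_2\le\sigma$ and $\E[g_t]=\nabla f(x_t)$, gives (in expectation)
\begin{align*}
f(x_{t+1})\le f(x_t)-\alpha_t\|\nabla f(x_t)\|_2^2+\tfrac{1}{2}L\sigma^2\alpha_t^2 .
\end{align*}
Rearranging and summing over $t\in[T]$ telescopes the loss gaps to $f(x_0)-f^*$:
\begin{align*}
\sum_{t=1}^T\alpha_t\|\nabla f(x_t)\|_2^2\le (f(x_0)-f^*)+\tfrac{1}{2}L\sigma^2\sum_{t=1}^T\alpha_t^2 .
\end{align*}
I would then divide by $\sum_t\alpha_t$, use $\min_t\|\nabla f(x_t)\|_2^2\cdot\sum_t\alpha_t\le\sum_t\alpha_t\|\nabla f(x_t)\|_2^2$, and apply the window bounds in the form $\sum_t\alpha_t^2\le(1+\gamma)\alpha_0\sum_t\alpha_t$ and $\sum_t\alpha_t\ge T(1-\gamma)\alpha_0$ to reach
\begin{align*}
\min_t\|\nabla f(x_t)\|_2^2\le\frac{f(x_0)-f^*}{T(1-\gamma)\alpha_0}+\frac{L\sigma^2(1+\gamma)\alpha_0}{2}.
\end{align*}
Substituting $\alpha_0=\frac{\sqrt{f(x_0)-f^*}}{\sigma\sqrt{LT}}$ makes both terms carry the common factor $\frac{1}{\sqrt T}\sigma\sqrt{L(f(x_0)-f^*)}$, collapsing the right-hand side to $\frac{1}{\sqrt T}\sigma\sqrt{L(f(x_0)-f^*)}\big(\frac{1}{1-\gamma}+\frac{1}{2}(1+\gamma)\big)$; forcing this below $\epsilon^2$ then fixes the iteration count $T$ announced in the theorem.

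The step I expect to be the main obstacle is the statistical coupling between $\alpha_t$ and $g_t$: because $\alpha_t$ is built from $\langle g_t,g_{t-1}\rangle$, the product $\alpha_t\langle\nabla f(x_t),g_t\rangle$ does not factor under expectation, and one cannot simply replace $\langle\nabla f(x_t),g_t\rangle$ by $\|\nabla f(x_t)\|_2^2$. I would handle this by exploiting the sign-definiteness of the inner product (via Assumption~\ref{ass:lower_bound_langle_nabla_f_g_rangle}, which forces $\langle\nabla f(x_t),g_t\rangle\ge0$) so that the almost-sure window $(1-\gamma)\alpha_0\le\alpha_t\le(1+\gamma)\alpha_0$ can be inserted \emph{before} taking expectations; this is exactly what produces the $\frac{1}{1-\gamma}$ and $\frac{1}{2}(1+\gamma)$ constants. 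Two smaller points must also be checked: that the regret (revert) branch of Algorithm~\ref{alg:RDBD} never increases the loss relative to the plain step, so the same descent inequality governs both cases (supplied by Parts~1--2 of Lemma~\ref{lem:steeper_descent:formal}), and that the final extraction of $T$ from a $\tfrac{1}{\sqrt T}$-type bound is carried out consistently.
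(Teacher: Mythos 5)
Your proposal follows essentially the same skeleton as the paper's proof of Theorem~\ref{thm:convergence_rdbd:formal}: a one-step expected descent inequality from Corollary~\ref{cor:lipschitz}, telescoping over $t\in[T]$, and the drift bound of Lemma~\ref{lem:bound_alpha}, which with $\eta=\frac{\gamma\sqrt{f(x_0)-f^*}}{T\sigma^3\sqrt{LT}}$ gives exactly your window $T\eta\sigma^2=\gamma\alpha_0$, hence $(1-\gamma)\alpha_0\le\alpha_t\le(1+\gamma)\alpha_0$; this reproduces the paper's constants $\frac{1}{1-\gamma}+\frac{1}{2}(1+\gamma)$ and the same final bound $\min_t\|\nabla f(x_t)\|_2^2\le\frac{1}{\sqrt{T}}\,\sigma\sqrt{L(f(x_0)-f^*)}\,\bigl(\frac{1}{1-\gamma}+\frac{1}{2}(1+\gamma)\bigr)$. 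The organizational difference is cosmetic: the paper divides each per-step inequality by $\alpha_t$, bounds $\frac{1}{\alpha_t}\le\frac{1}{\alpha_0-T\eta\sigma^2}$, and controls $\sum_t\alpha_t$ by integrating $\alpha_0+t\eta\sigma^2$, whereas you keep the weighted sum $\sum_t\alpha_t\|\nabla f(x_t)\|_2^2$ and divide once by $\sum_t\alpha_t$ — same arithmetic, same constants, and your route avoids the paper's integral detour. Where you genuinely go beyond the paper is the statistical coupling you flag as the main obstacle: in Definition~\ref{def:alpha_t}, $\alpha_t$ depends on $g_t$ (and even on $g_{t+1}$ through the sign test), yet the paper's Eq.~\eqref{eq:ineq_e_f_x_t_add_1_RDBD} passes the expectation through as if $\alpha_t$ were deterministic, writing $\E[\alpha_t\langle\nabla f(x_t),g_t\rangle]=\alpha_t\|\nabla f(x_t)\|_2^2$ with no justification. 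Your repair — insert the almost-sure window before taking expectations, using $\langle\nabla f(x_t),g_t\rangle\ge 0$ from Assumption~\ref{ass:lower_bound_langle_nabla_f_g_rangle} — is sound and makes the expectation step valid, at the honest price of invoking an assumption the theorem statement does not list (though the paper's problem setup does assume it, and its own proof of this theorem never uses it).

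One caveat you inherit from the paper rather than introduce: writing $C:=\sigma\sqrt{L(f(x_0)-f^*)}\bigl(\frac{1}{1-\gamma}+\frac{1}{2}(1+\gamma)\bigr)$, the derived bound is $\min_t\|\nabla f(x_t)\|_2^2\le C/\sqrt{T}$, so forcing the right-hand side below $\epsilon^2$ requires $T\ge C^2/\epsilon^4$, not the announced $T=C/\epsilon^2$; with the stated $T$ one only obtains $\min_t\|\nabla f(x_t)\|_2\le\epsilon^{1/2}C^{1/4}$. Your closing sentence glosses over this in the same way the paper's final display does, so the mismatch lies in the theorem's stated iteration count, not in anything specific to your derivation — but be aware that as written neither your argument nor the paper's actually delivers $\min_t\|\nabla f(x_t)\|_2\le\epsilon$ at the claimed $T$.
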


\begin{proof}
    We have
    \begin{align}\label{eq:ineq_f_x_t_add_1_RDBD}
        f(x_{t+1}^{\rm RDBD}) 
        \leq & ~ f(x_t) + \langle \nabla f(x_t), x_{t+1}^{\rm RDBD} - x_t \rangle + \frac{1}{2} L \| x_{t+1}^{\rm RDBD} - x_t \|_2^2 \notag \\
        = & ~ f(x_t) - \langle \nabla f(x_t), \alpha_t g_t \rangle + \frac{1}{2} L \| \alpha_t g_t \|_2^2 \notag \\
        = & ~ f(x_t) - \alpha_t \langle \nabla f(x_t), g_t \rangle + \frac{1}{2} L \alpha_t^2 \| g_t \|_2^2
    \end{align}
    where the first inequality follows from Corollary~\ref{cor:lipschitz}, the second equality follows from Definition~\ref{def:alpha_t}, the last equality follows from simple algebra.

    Hence, we can show that
    \begin{align}\label{eq:ineq_e_f_x_t_add_1_RDBD}
        \E[f(x_{t+1}^{\rm RDBD})]
        \leq & ~ \E[f(x_t) - \alpha_t \langle \nabla f(x_t), g_t \rangle + \frac{1}{2} L \alpha_t^2 \| g_t \|_2^2] \notag \\
        = & ~ f(x_t) - \alpha_t \langle \nabla f(x_t), \E[g_t] \rangle + \frac{1}{2} L \alpha_t^2 \E[\| g_t \|_2^2] \notag \\
        = & ~ f(x_t) - \alpha_t \| \nabla f(x_t) \|_2^2 + \frac{1}{2} L \alpha_t^2 \E[\| g_t \|_2^2] \notag \\
        \leq & ~ f(x_t) - \alpha_t \| \nabla f(x_t) \|_2^2 + \frac{1}{2} L \alpha_t^2 \sigma^2
    \end{align}
    where the first inequality follows from the expectation of Eq.\eqref{eq:ineq_f_x_t_add_1_RDBD}, the second equality follows from simple algebra, the third equality follows from Definition~\ref{def:g}, the last inequality follows from Assumption~\ref{ass:upper_bound_grad_f}.

    Let $\eta = \frac{\gamma\sqrt{f(x_0) - f^*}}{T\sigma^3\sqrt{L T}}$, $\alpha_0 = \frac{\sqrt{f(x_0) - f^*}}{\sigma\sqrt{L T}}$, we obtain
    \begin{align}\label{eq:upper_bound_e_nabla_f}
        \sum_{t=1}^T \| \nabla f(x_t) \|_2^2 \leq & ~ \sum_{t=1}^T (\frac{1}{\alpha_t}(f(x_t) - \E[f(x_{t+1}^{\rm RDBD})]) + \frac{1}{2} L \alpha_t^2 \sigma^2) \notag \\
        \leq & ~ \sum_{t=1}^T \frac{1}{\alpha_t}(f(x_t) - \E[f(x_{t+1}^{\rm RDBD})]) + \sum_{t=1}^T \frac{1}{2} L \alpha_t^2 \sigma^2 \notag\\
        = & ~ \frac{1}{\alpha_0 - T\eta\sigma^2}\sum_{t=1}^T (f(x_t) - \E[f(x_{t+1}^{\rm RDBD})]) + \sum_{t=1}^T \frac{1}{2} L \alpha_t^2 \sigma^2 \notag \\
        = & ~ \frac{1}{\alpha_0 - T\eta\sigma^2}( f(x_0) - f^*) + \sum_{t=1}^T \frac{1}{2} L \alpha_t^2 \sigma^2 \notag\\
        \leq & ~ \frac{1}{\alpha_0 - T\eta\sigma^2}( f(x_0) - f^*) + \frac{1}{2} L \sigma^2 (\alpha_0 + T\eta\sigma^2) \sum_{t=1}^T \alpha_t  \notag\\
        \leq & ~ \frac{1}{\alpha_0 - T\eta\sigma^2}( f(x_0) - f^*) + \frac{1}{2} L \sigma^2 \sum_{t=1}^T \alpha_t  \notag \\
        = & ~ \frac{1}{\alpha_0 - T\eta\sigma^2}( f(x_0) - f^*) + \frac{1}{2} L \sigma^2 \sum_{t=1}^T (\alpha_0 + t \eta \langle g_t, g_{t-1} \rangle ) \notag \\
        \leq & ~ \frac{1}{\alpha_0 - T\eta\sigma^2}( f(x_0) - f^*) + \frac{1}{2} L \sigma^2 \sum_{t=1}^T (\alpha_0 + t \eta \sigma^2 ) \notag \\
        \leq & ~ \frac{1}{\alpha_0 - T\eta\sigma^2}( f(x_0) - f^*) + \frac{1}{2} L \sigma^2 \int_1^T (\alpha_0 + t \eta \sigma^2 )\d t \notag \\
        \leq & ~ \frac{1}{\alpha_0 - T\eta\sigma^2}( f(x_0) - f^*) + \frac{1}{2} L \sigma^2 \cdot (\alpha_0(T-1) + \frac{1}{2}\eta \sigma^2 (T-1)^2) \notag \\
        \leq & ~ \frac{1}{\alpha_0 - T\eta\sigma^2}( f(x_0) - f^*) + \frac{1}{2} L \sigma^2 \cdot (\alpha_0 T + \frac{1}{2}\eta \sigma^2 T^2) \notag \\
        = & ~ \frac{1}{\alpha_0 - \frac{\gamma\sqrt{f(x_0) - f^*}}{\sigma \sqrt{L T}}}( f(x_0) - f^*) + \frac{1}{2} L \sigma^2 \cdot (\alpha_0 T + \frac{0.5 \gamma \sqrt{f(x_0) - f^*}}{\sigma \sqrt{L T}} T) \notag \\
        \leq & ~ \frac{1}{\alpha_0 - \frac{\gamma\sqrt{f(x_0) - f^*}}{\sigma \sqrt{L T}}}( f(x_0) - f^*) + \frac{1}{2} L \sigma^2 \cdot (\alpha_0 T + \frac{\gamma\sqrt{f(x_0) - f^*}}{\sigma \sqrt{L T}} T) \notag \\
        = & ~ \frac{1}{\frac{\sqrt{f(x_0) - f^*}}{\sigma\sqrt{LT}} - \frac{\gamma\sqrt{f(x_0) - f^*}}{\sigma \sqrt{L T}}}( f(x_0) - f^*) + \frac{1}{2} L \sigma^2 \cdot (\frac{\sqrt{f(x_0) - f^*}}{\sigma\sqrt{LT}} T + \frac{\gamma\sqrt{f(x_0) - f^*}}{\sigma \sqrt{L T}} T) \notag \\
        = & ~ \sigma \sqrt{LT(f(x_0) - f^*)} (\frac{1}{1- \gamma} + \frac{1}{2} ( 1 + \gamma ))
    \end{align}
    where the first inequality follows from Eq.~\eqref{eq:ineq_e_f_x_t_add_1_RDBD}, the second equality follows from simple algebra, the third inequality follows from Lemma~\ref{lem:bound_alpha}, the fourth equality follows from simple algebra, the fifth inequality follows from Lemma~\ref{lem:bound_alpha} and simple algebra, the sixth inequality follows from $\alpha_0 + T \eta \sigma^2 < 1$, the seventh equality follows from Definition~\ref{def:alpha_t}, the eighth inequality follows from $\langle g_t, g_{t-1} \rangle \leq \| g_t \|_2 \| g_t \|_2 \leq \sigma^2$, the ninth and the tenth inequalities follow from simple integral rules, the eleventh inequality follows from simple algebra, the twelfth equality follows from $\eta = \frac{\gamma\sqrt{f(x_0) - f^*}}{T\sigma^3\sqrt{L T}}$, the thirteenth inequality follows from simple algebra, the fourteenth equality follows from $\alpha_0 = \frac{\sqrt{f(x_0) - f^*}}{\sigma\sqrt{L T}}$, the last equality follows from simple algebra.

    Thus, we have
    \begin{align*}
        \min_t \{ \| \nabla f(x_t) \|_2^2 \} 
        = & ~ \sqrt{\min_t \{ \| \nabla f(x_t) \|_2^2 \}}  \\
        \leq & ~ \sqrt{\frac{1}{T} \sum_{t=1}^T \| \nabla f(x_t) \|_2^2}   \\
        \leq & ~ \sqrt{\frac{1}{T} \cdot \sigma \sqrt{LT(f(x_0) - f^*)} (\frac{1}{1- \gamma} + \frac{1}{2} ( 1 + \gamma )) } \\
        = & ~ \sqrt{\frac{1}{\sqrt{T}} \cdot \sigma \sqrt{L(f(x_0) - f^*)} (\frac{1}{1- \gamma} + \frac{1}{2} ( 1 + \gamma )) }
    \end{align*}
    where the first equality and the second inequality follow from simple algebras, the third inequality follows from Eq.~\eqref{eq:upper_bound_e_nabla_f}, the last equality follows from simple algebra.
\end{proof}

\section{Experiment Detail}\label{app:experiment}

In Appendix~\ref{sub:setup}, we provide our setup for our experiment. In Appendix~\ref{sub:result}, we provide more results of our experiments.

\subsection{Setup}\label{sub:setup}

\paragraph{Datasets. } We utilize the MNIST dataset \cite{lcb09} and the Cifar-10 dataset \cite{kh09} as our primary datasets for evaluating the effectiveness of our methods.

\paragraph{Models. } For the MNIST dataset, we employ a 3-layer linear network with the ReLU activation function \cite{nh10}. This architecture is appropriate for capturing the complex patterns and features present in the digit images of the MNIST dataset. On the other hand, for the Cifar-10 dataset, we utilize a 3-layer convolutional neural network (CNN) \cite{ksh12} with the ReLU activation function. The CNN architecture is specifically designed to effectively process and extract relevant features from the RGB images in the Cifar-10 dataset.

\paragraph{Metric.} We rely on the cross-entropy loss as the primary metric to evaluate the performance of our algorithms, that is $\mathcal{L}(\hat{y}, y) = - \frac{1}{N} \sum_{i=1}^N y_i \log \hat{y}_i$, where $N$ stands the number of classes. The cross-entropy loss measures the dissimilarity between the predicted probabilities and the true labels, providing valuable insights into the accuracy and effectiveness of our models.

\paragraph{Hyper-parameters. } For both the MNIST and Cifar-10 datasets, we establish the following hyperparameter settings:

\begin{itemize}
    \item Batch Size: We set the batch size to 16. This refers to the number of training examples processed in each iteration before updating the model's parameters.
    \item Initial Learning Rate: We initialize the learning rate, denoted as $\alpha_0$, to 0.005. The learning rate determines the step size at which the model adjusts its parameters during the training process.
    \item Learning Rate Schedule: We set $\eta$ to $0.01$ as the learning rate for the learning rate schedule. This schedule determines how the learning rate is adjusted during the training process based on certain criteria, such as the number of iterations or the validation performance.
\end{itemize}

By carefully selecting these hyper-parameters, we aim to strike a balance between model convergence and computational efficiency, ensuring an effective and stable training process for both the MNIST and Cifar-10 datasets.

\paragraph{Hyper-parameters for Adam+RDBD. } For our hybrid approach, which combines the Adam optimizer with our RDBD algorithm, we specify the following hyperparameter values:

\begin{itemize}
    \item Adam Hyperparameters: We set $\beta_1$ to 0.05 and $\beta_2$ to 0.99 as the hyperparameters for the Adam optimizer. These values control the exponential decay rates for the first and second moments of the gradients, respectively.
    \item Learning Rate Schedule: We set $\eta$ to 0.0000005 as the learning rate for the learning rate schedule. This schedule determines how the learning rate is adjusted during the training process based on certain criteria, such as the number of iterations or the validation performance.
    \item Upper bound on learning rate: Following Theorem~\ref{thm:main_result:formal}, we set a upper bound for $\alpha_t$ to prevent collapse phenomenon in training process.
\end{itemize}

By carefully selecting these hyperparameter values, we aim to leverage the strengths of both the Adam optimizer and our RDBD algorithm to achieve improved convergence and performance in our training process.

\paragraph{Platform and Device. } For our deep learning experiments, we utilize the PyTorch library \cite{pgm+19} as our chosen platform for training our models. PyTorch offers a comprehensive set of tools and functionalities that enable efficient and effective deep learning model development and training. To ensure optimal performance and accelerated computation, all of our experiments are conducted on a powerful RTX 3090 GPU device. The RTX 3090 GPU provides substantial computational capabilities, allowing us to leverage its parallel processing capabilities for faster and more efficient training of our deep learning models.

\subsection{Results}\label{sub:result}

\paragraph{Robustness on Different Initial Learning Rates}

To investigate the impact of the initial learning rate $\alpha_0$ on the training process, we conducted experiments using the RDBD algorithm on the MNIST dataset. We evaluated the loss reduction at the first 3750 steps for different initial learning rates, specifically $\alpha_0 \in \{ 0.01, 0.005, 0.001, 0.0005, 0.0001 \}$. The results, depicted in Figure~\ref{fig:lr_robustness}, demonstrate the ability of our RDBD algorithm to effectively speed up the training process and converge, even when the initial learning rate is as low as 0.0001.

Figure~\ref{fig:lr_robustness} presents a comparison of the loss values for different initial learning rates. It is evident that the RDBD algorithm exhibits robustness across a range of initial learning rates. Regardless of the specific value chosen, the algorithm is able to converge and achieve competitive performance within approximately 2500 steps.

This analysis highlights the robustness of the RDBD algorithm in the face of varying initial learning rates, underscoring its effectiveness for training neural networks on the MNIST dataset.

\begin{figure}[!ht]
    \centering
    \includegraphics[width=0.5\textwidth]{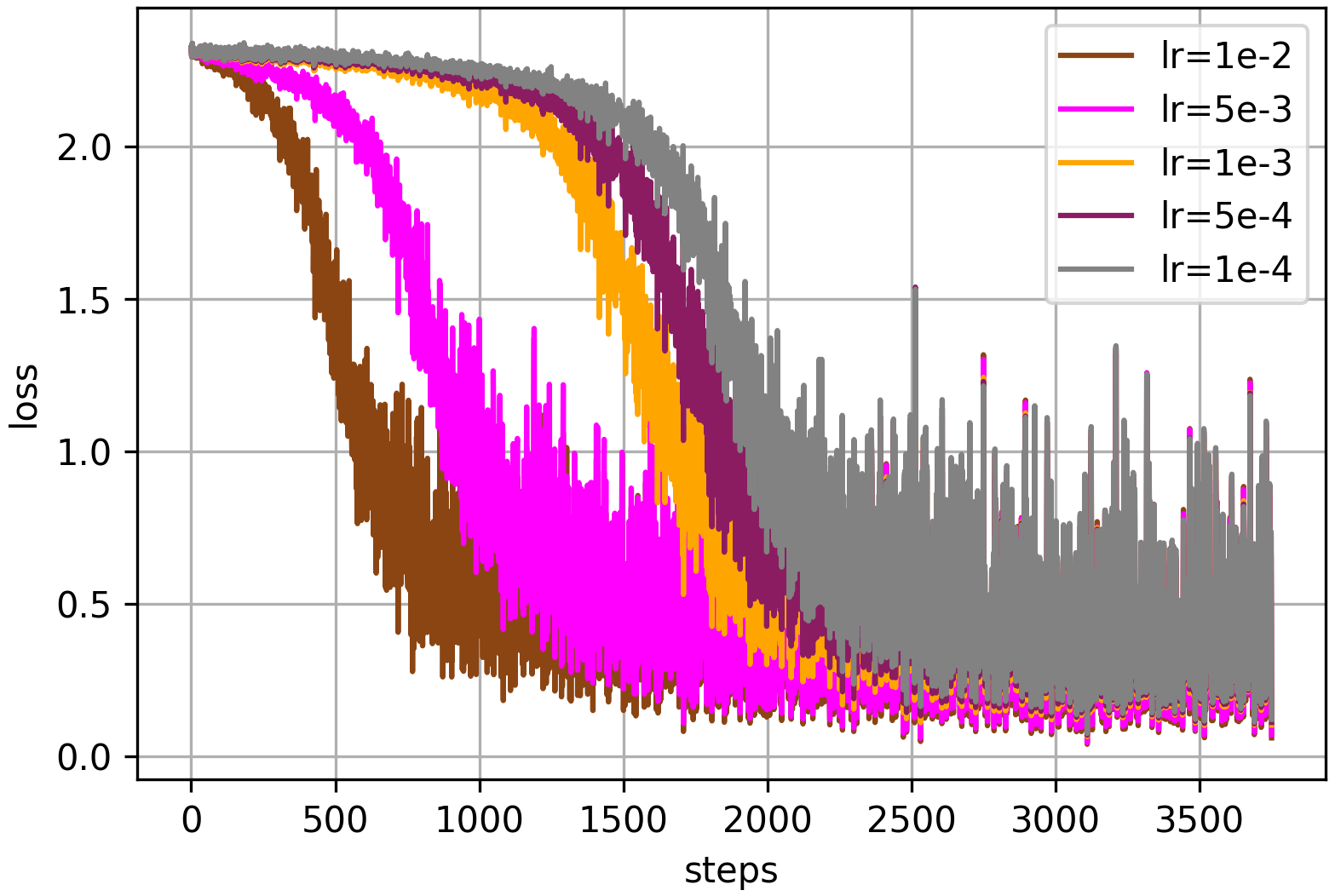}
    \caption{Comparison of loss of different initial learning rate. }
    \label{fig:lr_robustness}
\end{figure}

\begin{figure}[!ht]
    \centering
    \includegraphics[width=0.5\textwidth]{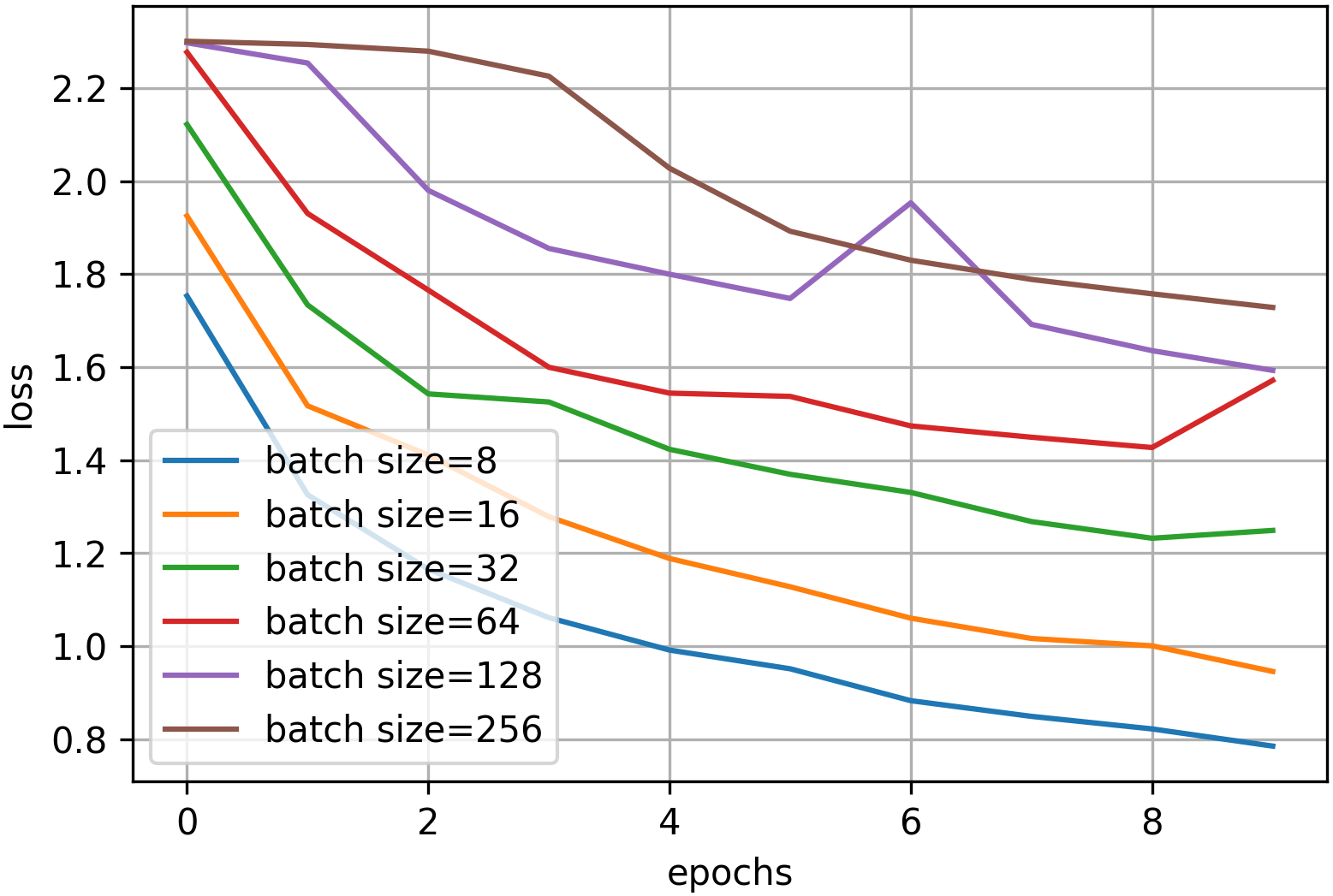}
    \caption{Comparison of loss of different batch sizes.  }
    \label{fig:bs_impact}
\end{figure}

\paragraph{Impact of Different Batch Sizes}

We investigated how batch size impacts convergence speed by running experiments on the CIFAR-10 dataset for the first 10 epochs using different batch sizes. As the results in Figure~\ref{fig:bs_impact} show, batch size has a significant effect on the convergence of the RDBD algorithm.

When the batch size is small, the RDBD algorithm accelerated quickly due to the faster gradient calculation and updates on each iteration. However, when the batch size is large, the RDBD algorithm contributed very little to speeding up convergence. This is likely because gradient calculations become inefficient for larger batch sizes, slowing down convergence.

In summary, small batch sizes allow for quicker convergence and acceleration of the RDBD algorithm during training on the CIFAR-10 dataset.

\ifdefined\isarxiv
\bibliographystyle{alpha}
\bibliography{ref}

\else
\fi




\end{document}